\newtheorem{theorem}{Theorem}
\newtheorem{proposition}{Proposition}
\newtheorem{lemma}{Lemma}
\newtheorem{remark}{Remark}
\newcommand{\ignore}[1]{}
\newcommand{\dkl}[2]{D_{KL}\left(#1\middle \|#2\right)}
\newtheorem*{theorem*}{Theorem}
\newtheorem*{lemma*}{Lemma}
 \declaretheorem[parent=section]{theorem}
\newtheorem{proposition}{Proposition}
\declaretheorem[parent=section]{lemma}
\declaretheorem{remark}
    \let\Cref\crtCref
    \let\cref\crtcref
\newcommand{\W}{\mathcal{W}}
\newcommand{\E}{\mathop{\mathbb{E}}}
\renewcommand{\epsilon}{\varepsilon}
\newcommand{\Z}{\mathcal{Z}}
\renewcommand{\Z}{\mathcal{Z}}
\newcommand{\X}{\mathcal{X}}
\newcommand{\err}{\Delta_D}
\newcommand{\labelthis}[1]{\stepcounter{equation}\tag{\theequation}\label{#1}}
\newcommand{\pcond}[2]{\left(#1~\middle| #2\right)}
\newcommand{\econd}[2]{\left[#1~\middle| #2\right]}
\newcommand{\hdef}[2]{\left( \frac{#1^2}{192\sqrt{2}#2^2\ln 2^{20} \frac{#2^2}{#1^2}}\right)^2}
\newcommand{\hm}[1]{\left( \frac{#1^2}{192\sqrt{2}m\ln 2^{20} \frac{m}{#1^2}}\right)^2}
\renewcommand{\P}{\mathbb{P}}
\newcommand{\Y}{\mathcal{Y}}
\newtheorem*{question}{Open Problem}
\newcommand{\XX}{X_p}
\newcommand{\YY}{Y_p}
\newcommand{\G}{\mathcal{G}}
\newcommand{\icmlversion}[2]{#2}
\title{Information Theoretic Lower Bounds for Information Theoretic Upper Bounds}
\author{%
  Roi Livni\\
  Department Electrical Engineering\\
  Tel Aviv University\\
  \texttt{rlivni@tauex.tau.ac.il} \\
  % examples of more authors
  % \And
  % Coauthor \\
  % Affiliation \\
  % Address \\
  % \texttt{email} \\
  % \AND
  % Coauthor \\
  % Affiliation \\
  % Address \\
  % \texttt{email} \\
  % \And
  % Coauthor \\
  % Affiliation \\
  % Address \\
  % \texttt{email} \\
  % \And
  % Coauthor \\
  % Affiliation \\
  % Address \\
  % \texttt{email} \\
}
\begin{document}
%\ictitle
\maketitle
\begin{abstract}
We examine the relationship between the mutual information between the output model and the empirical sample and the generalization of the algorithm in the context of stochastic convex optimization. Despite increasing interest in information-theoretic generalization bounds, it is uncertain if these bounds can provide insight into the exceptional performance of various learning algorithms. Our study of stochastic convex optimization reveals that, for true risk minimization, dimension-dependent mutual information is necessary. This indicates that existing information-theoretic generalization bounds fall short in capturing the generalization capabilities of algorithms like SGD and regularized ERM, which have dimension-independent sample complexity.
\end{abstract}

\section{Introduction}
One of the crucial challenges facing contemporary generalization theory is to understand and explain the behavior of overparameterized models. These models have a large number of parameters compared to the available training examples. But nonetheless, they tend to perform well on unseen test data. The significance of this issue has become more pronounced in recent years, as it has become evident that many state-of-the-art learning algorithms are highly overparameterized \cite{neyshabur2014search, zhang2021understanding}. The classical generalization bounds, which are well designed to describe the learning behavior of underparameterized models, seem to fail to explain these algorithms.

Understanding the success of overparameterized models seems challenging. Partly, due to the counter-intuitive nature of the process. Common wisdom suggests that, inorder to learn, one has to have certain good bias of the problem at hand, and that in learning we need to restrict ourselves to a class of models that cannot overfit the data. This intuition has been justified by classical learning models such as PAC learning\cite{valiant1984theory} as well as regression \cite{alon1997scale}. In these classical models, it can be even demonstrated \cite{vapnik2015uniform, blumer1989learnability} that learning requires more examples than the capacity of the class of model to be learnt, and that avoiding interpolation is necessary for generalization. These results, though, are obtained in distribution-independent setups where one assumes worst-cast distributions over the data.

For this reason, researchers have been searching for new, refined models, as well as improved generalization bounds that incorporate distributional as well as algorithmic assumptions. A promising approach, in this direction, tries to connect the generalization performance to the amount of information the learner holds regarding the data \cite{russo2019much, xu2017information, bassily2018learners}. For example, \citet{xu2017information} demonstrated an upper bound on the generalization gap which informally states:

\begin{equation}\label{eq:theupperbound}
\textrm{generalization gap}(w_S) = O\left(\sqrt{\frac{I(w_S,S)}{|S|}}\right)
\end{equation}
Namely, given an empirical sample $S$, and an output model $w_S$, the difference between its empirical error and its true error can be upper bounded by $I(w_S,S)$,  the mutual information between these two random variables. Notice that \cref{eq:theupperbound} does not depend on any trait of the class of feasible models to be considered. In particular, it does not depend, apriori, on number of ``free parameters", or a complexity measure such as VC dimension, the dimension of $w_S$, and not even on some prior distribution.
However, it remains a question whether this method and technique can be useful in analysing state-of-the-art learning algorithms. While there has been a lot of work trying to establish the success of learning algorithms in various setups \cite{neu2021information, xu2017information, bu2020tightening, aminian2021exact, pensia2018generalization}, many of the established bounds are opaque, and often there is no comprehensive end-to-end analysis that effectively illustrates how  generalization is to be bounded by \cref{eq:theupperbound} and \emph{simultaneously} obtain good empirical performance. In fact, there is also evidence \cite{carlini2021extracting, feldman2020does} that memorizing data is required, in some regimes, for effective learning. 
Towards better understanding, we will focus, in this work, on the setup of \emph{Stochastic Convex Optimization} \cite{shalev2009stochastic} (SCO), and provide accompanying lower bounds to \cref{eq:theupperbound} that will describe how much mutual information is \emph{necessary} for learning.

\paragraph{SCO as a case study for overparametrization:}
SCO is a very clean and simple setup where a learner observes noisy instances of (Lipschitz) convex functions, defined in $\mathbb{R}^d$, and is required to minimize their expectation. On the one hand, it provides simple, amenable to rigorous analysis, definitions of learnability and learning. On the other hand, this model is the cradle of prototypical algorithms such as Gradient Descent (GD) and Stochastic Gradient Descent (SGD), as well as accelerated methods, which are the workhorse behind state-of-the-art optimization methods.

Moreover, SCO is an ideal model for understanding overparameterization. It is known \cite{feldman2016generalization} that in this setup, $\Omega(d)$ examples are needed in order to avoid overfitting. In fact, even concrete algorithms such as GD and regularized-GD may overfit unless they observe dimension-dependent sample size \cite{amir2021never,amir2021sgd}. In other words, the capacity of the model and its ability to overfit does indeed scale with the dimension. Nevertheless, it is also known that \emph{some} algorithms do learn with far fewer examples. For example SGD\cite{hazan2016introduction}, Regularized-ERM\cite{shalev2009stochastic,bousquet2002stability}  and a stable variant of GD \cite{bassily2020stability} all learn with $O(1/\epsilon^2)$ examples, a dimension independent magnitude. %
To put it differently, learning in SCO is not just a question of finding the empirical risk minimizer, but also a question of \emph{how} -- what algorithm was used, and learning is not demonstrated by naive uniform convergence bounds that scale with the number of parameters in the model.

%Subsequent work demonstrated how these lower bounds can affect concrete algorithms. For examples, while algorithms such as without-replacement-one-pass SGD, Regularized-ERM, can learn with sample complexity $O(1/\epsilon^2)$, algorithms such as Gradient Descent or Regularized Gradient Descent may fail to learn, and will overfit, unless tuned correctly. While this phenomena is often observed in practice, theoretically it is quite challenging to define a rigorous model where such separation between learnability and empirical risk minimization exists.

Therefore, SCO is a natural candidate to study how information theoretic bounds play a role in learning. We might even hope that these bounds shed light on why some algorithms succeed to learn while others fail. Existing algorithms don't avoid memorizing the data, but it is unclear if holding information on the data is \emph{necessary}. So we start here with the simplest question: 
\begin{center}
What is the smallest amount of mutual information required for learning in SCO?
\end{center}
Our main result shows that, in contrast with the dimension-independent learnability results in this setup, the information between the model and the sample has to be \emph{dimension-dependent}. As such, the complexity of the class appears implicitly in \cref{eq:theupperbound}. As a result, carrying $\Omega(d)$ bits of information over the sample is \emph{necessary} for learning at optimal rates, and \cref{eq:theupperbound} doesn't yield the optimal generalization performance of algorithms such as Regularized ERM, SGD and stable-GD.

\subsection{Related Work}
Information-theoretic generalization bounds have a long history of study in ML theory \cite{mcallester1998some,mcallester1999pac,langford2002pac}. Generalization bounds that directly relate to the information between output and input of the learner initiated in the works of \cite{xu2017information, bassily2018learners, russo2019much}, \citet{bassily2018learners} demonstrated limitations for such generalization bounds, for proper ERM learners, and \citet{livni2020limitation} showed that any learner (proper or not), that learns the class of thresholds must leak unbounded amount of information. In this work we focus on stochastic optimization and on learning Lipschitz functions. In the setup of SCO the above is not true, and one can construct learners that leak $\tilde{O}(d)$ bits of information (see \cref{thm:observation}).
But we would like to know whether information-theoretic bounds behave like the uniform convergence bounds (dimension dependent) or capture the minmax learning rates (dimension independent).

Several lines of works applied the analysis of \citet{xu2017information} to provide algorithmic-dependent analysis in the context of stochastic optimization. \citet{pensia2018generalization} and followup improvements \cite{rodriguez2021random, negrea2019information, haghifam2020sharpened}  provided information-theoretic generalization bounds for Stochastic Gradient Langevine Dynamics (SGLD) and \citet{neu2021information} extends the idea to analyze vanilla SGD. \citet{aminian2021exact} also provides full characterization of the closely related, Gibbs Algorithm and shows that the information can inverseley scale with the sample bounds. The bounds in these works are \emph{implicitly} dimension independent which may seem contradictory to the result established here. Importantly, the above bounds may depend on hyper-parameters such as noise in SGLD and temparature in Gibbs algorithm, and these hyperparameters may affect the optimization performance of the algorithms. The bounds we obtain are applicable only to algorithms with non-trivial true risk, which depends on hyperparameter choice, and as such there is no real contradiction. Taken together, the noise, for example, in SGLD needs to scale with the dimension in order to obtain non-trivial information-theoretic bounds, but that will lead to a large empirical error. Similarly, if the temperature in the Gibbs algorithm doesn't scale with the dimension, one can potentially achieve small empirical error but at the expanse of high information.

Most similar to our work, recently, \citet{haghifam2022limitations} provided the first set of limitations to information theoretic generalization bounds. They focus on the Gradient Descent method and perturbed variants of it, and provide limitations to both MI bounds as well as conditional mutual information (CMI) bounds \cite{steinke2015interactive} and their individual sample variants \cite{bu2020tightening, negrea2019information, haghifam2020sharpened, zhou2022individually}. In contrast, we focus on the mutual information bound (as well as its individual sample version of \cite{bu2020tightening}), but we provide bounds that are irrespective of the algorithm.

The key idea behind our lower bound proof builds on privacy attacks developed in the differntial privacy literature \cite{bun2014fingerprinting,kamath2019privately, steinke2015interactive}. In the context of classification, lower and upper bounds techniques \cite{alon1997scale,bun2020equivalence} were successfully imported to obtain analogous information-theoretic bounds \cite{livni2020limitation, pradeep2022finite}. In optimization, though, bounds behave slightly different and therefore, bounds from classification cannot be directly imported to the context of optimization.
\section{Setup and Main Results}\label{sec:setup}
We begin by describing the classical setup of Stochastic Convex Optimization (SCO), following \citet{shalev2009stochastic}. In this model, we assume a domain $\Z$, a parameter space $\W\subseteq \mathbb{R}^d$ and a function $f(w,z)$, termed \emph{loss} function. The function $f$ satisfies that for every $z_0\in \Z$, the function $f(w,z_0)$ as a function over the parameter $w$ is convex and $L$-Lipschitz. %We will also discuss strongly convex functions. Recall that a function $f$ is called $\lambda$-strongly convex, if $f- \frac{\lambda}{2}\|w\|^2$ is convex (see \cite{hazan2016introduction}).

For concreteness, we treat $L$ as a constant, $L=O(1)$, and we concentrate on the case where $\W$ is the unit ball. Namely:
\[ \W= \{w: \|w\|\le 1\}.\]
As we mostly care about lower bounds, these won't affect the generality of our results. 
Given a distribution $D$, the expected loss of a parameter $w$ is given by 
 \[ L_D(w) = \E_{z\sim D}[f(w,z)].\]
 The excess true risk of $w$, with respect to distribution $D$,is denoted as: 
 
  \[ \err(w) = L_D(w)- L_D(w^\star), \quad\textrm{where,}~ L_D(w^\star) := \min_{w\in \W} \E_{z\sim D}[f(w,z)].\]
We also denote the excess empirical risk, given sample $S=\{z_1,\ldots, z_m\}$:
\[ \Delta_S(w) = \frac{1}{m}\sum_{i=1}^m f(w,z_i) - \min_{w\in \W} \frac{1}{m}\sum_{i=1}^m f(w,z_i).\]
  
\ignore{Finally, given an algorithm $A$ and its output $w_S$, we denote the suboptimality of $A$ respect to sample of size $m$:
\[ \Delta_m(A) = \sup_{D} \E_{S\sim D^{m}} [\Delta_D(w^A_S)].\]}

 \paragraph{Leranbility} We will focus here on the setup of \emph{learning in expectation}. In particular, a learning algorithm $A$ is defined as an algorithm that receives a sample $S=(z_1,\ldots, z_m)$ and outputs a parameter $w_S^A$. We will normally supress the dependence of the parameter $w_S^A$ in $A$ and simply write $w_S$. The algorithm $A$ is said to \emph{learn} with sample complexity $m(\epsilon)$ if it has the following property: For every $\epsilon>0$, if $S$ is a sample drawn i.i.d from some unknown distribution $D$, and $|S|\ge m(\epsilon)$ then:
 \[ \E_{S\sim D^m} [\err(w_S)] \le \epsilon.\]
A closely related setup requires that the learner succeeds with high probability. Standard tools such as Markov's inequality and boosting the confidence \cite{schapire1990strength} demonstrate that the two definitions are essentially equivalent in our model.

 %A classical result in SCO is that there are algorithms that learn with optimal sample complexity of $m(\epsilon) = O(1/\epsilon^2)$. For example, it is well known that Stochastic Gradient Descent (without replacement) (see for example \cite{shalev2009stochastic,hazan2016introduction}), regularized-ERM \cite{bousquet2002stability, shalev2009stochastic}, and certain variant of Gradient Descent \cite{bassily2020stability}. 

 \paragraph{Information Theory}
We next overview basic concepts in information theory as well as known generalization bounds that are obtained via  such information-theoretic quantities. We will consider here the case of discrete random variables. We elaborate more on this at the end of this section, and how our results extend to algorithms with continuous output.
Therefore, throughout, we assume a discrete space of possible outcomes $\Omega$ as well as a distribution $\P$ over $\Omega$. Recall that a random variable $X$ that takes values in $\X$ is said to be distributed according to $P$ if $\P(x=X)= P(x)$ for every $x\in \X$. Similarly two random variables $X$ and $Y$ that take values in $\X$ and $\Y$ respectively have joint distribution $P$ if 
\[\P(x=X,y=Y)= P(x,y).\]
For a given $y\in \Y$, the conditional distribution $P_{X|y}$ is defined to be
$ P\pcond{x}{y=Y} = \frac{P(x,y)}{\sum_{x} P(x,y)},$
and the marginal distribution $P_X :\X\to [0,1]$ is given by $P_X(x)= \sum_{y}P(x,y)$.
If $P_1$ and $P_2$ are two distributions defined on a discrete set $\X$ then the KL divergence is defined to be:
\[ \dkl{P_1}{P_2} =\sum_{x\in \X} P_1(x) \log \frac{P_1(x)}{P_2(x)}.\]
Given a joint distribution $P$ that takes values in $\X\times \Y$ the mutual information between random variable $X$ and $Y$ is given by
\[ I(X;Y) = \E_{Y}\left[\dkl{P_{X|Y}}{P_X}\right].\]
%Given a further random variable $Z$, the conditional mutual information is defined to be:
%\[ I\pcond{X;Y}{Z} = \E_Z\left[\E_{Y}\left[\dkl{P_X\pcond{\cdot}{Z}}{P\pcond{\cdot}{Y,Z}}\right]\right].\]
We now provide an exact statement of \cref{eq:theupperbound}
\begin{theorem*}[\cite{xu2017information}]
Suppose $f(w,z)$ is a bounded by $1$ loss function. And let $A$ be an algorithm that given a sample $S=\{z_1,\ldots, z_m\}$ drawn i.i.d from a distribution $D$ outputs $w_S$. Then
\begin{equation}\label{eq:xu} \E_S\left[L_D(w_S) - \frac{1}{m}\sum_{i=1}^m f(w,z_i)\right] \le \sqrt{\frac{2 I(w_S,S)}{m}}.\end{equation}
\end{theorem*}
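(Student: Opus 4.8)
The plan is to derive \eqref{eq:xu} from the Donsker--Varadhan variational formula for the KL divergence combined with a subgaussian tail estimate for the empirical loss. Let $h(w,S) := \frac{1}{m}\sum_{i=1}^m f(w,z_i)$ denote the empirical loss viewed as a function of a parameter $w$ and a sample $S=(z_1,\dots,z_m)$, and let $P_{w_S,S}$ be the joint law of the algorithm's output and its input sample, with marginals $P_{w_S}$ and $P_S=D^m$. The first step is to rewrite the left-hand side of \eqref{eq:xu} as the gap between two expectations of the \emph{same} functional $h$: under the joint law, $\E_{P_{w_S,S}}[h(w_S,S)] = \E_S\!\left[\frac1m\sum_i f(w_S,z_i)\right]$, whereas under the product of the marginals --- i.e.\ a parameter $w\sim P_{w_S}$ paired with an \emph{independent} fresh i.i.d.\ sample --- we have $\E_{P_{w_S}\otimes P_S}[h(w,S)] = \E_{w\sim P_{w_S}}[L_D(w)] = \E_S[L_D(w_S)]$, since averaging $f(w,z_i)$ over $z_i\sim D$ gives exactly $L_D(w)$ and $w$ shares the marginal of $w_S$. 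Hence the quantity to be bounded equals $\E_{P_{w_S}\otimes P_S}[h] - \E_{P_{w_S,S}}[h]$.

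The second step is a generic lemma: if $(X,Y)$ are random variables and $h$ is any function such that $h(X,Y)$ is $\sigma^2$-subgaussian under the product measure $P_X\otimes P_Y$, then $\bigl|\E_{P_{X,Y}}[h]-\E_{P_X\otimes P_Y}[h]\bigr|\le\sqrt{2\sigma^2 I(X;Y)}$. I would prove this by applying Donsker--Varadhan to the pair $(P_{X,Y},P_X\otimes P_Y)$ with test function $\lambda h$ for $\lambda>0$: this yields $\lambda\E_{P_{X,Y}}[h] \le \dkl{P_{X,Y}}{P_X\otimes P_Y} + \log\E_{P_X\otimes P_Y}[e^{\lambda h}]$; using $\dkl{P_{X,Y}}{P_X\otimes P_Y}=I(X;Y)$ and the subgaussian bound $\log\E_{P_X\otimes P_Y}[e^{\lambda h}]\le \lambda\E_{P_X\otimes P_Y}[h]+\lambda^2\sigma^2/2$, then rearranging and optimizing over $\lambda$ gives one direction; applying the same to $-h$ gives the other.

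The third step is to pin down the subgaussian parameter for the choice $X=w_S$, $Y=S$. For a fixed $w$, $h(w,S)=\frac1m\sum_i f(w,z_i)$ is an average of $m$ i.i.d.\ terms each lying in $[-1,1]$ (as $f$ is bounded by $1$), so by Hoeffding's lemma each $f(w,z_i)/m$ is $(1/m^2)$-subgaussian and, by independence, $h(w,S)$ is $(1/m)$-subgaussian. Because this holds with the \emph{same} constant for every $w$, and because $S$ is independent of $w$ under $P_{w_S}\otimes P_S$, we get $\E\, e^{\lambda(h-\E h)} = \E_{w}\,\E_{S}\, e^{\lambda(h(w,S)-L_D(w))} \le e^{\lambda^2/(2m)}$, i.e.\ $h$ is $(1/m)$-subgaussian under the product measure as well. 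Substituting $\sigma^2=1/m$ into the lemma produces exactly $\sqrt{2I(w_S;S)/m}$, which is \eqref{eq:xu}.

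The main obstacle is really just this last transfer of subgaussianity from ``conditionally on each $w$'' to ``under the product law'', which is clean here only thanks to the independence of $S$ and $w$ under $P_{w_S}\otimes P_S$; the remaining work --- matching the two expectations to the generalization gap, and tracking the constant through Hoeffding's lemma so that the $[-1,1]$ range of $f$ yields the factor $2$ (rather than $1/2$) in front of $I(w_S;S)/m$ --- is routine bookkeeping.
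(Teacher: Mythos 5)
The paper does not prove this theorem; it cites it directly from Xu and Raginsky (2017), so there is no in-paper proof to match your argument against. Your plan correctly reconstructs the standard Donsker--Varadhan route of that reference and the decoupling identity in your first step is right, but there is a genuine error in your third step. Under $P_{w_S}\otimes P_S$ the mean $\E[h]$ equals $\E_{w}[L_D(w)]$, a fixed constant, not the $w$-dependent quantity $L_D(w)$. The claimed equality
\[
\E\,e^{\lambda(h-\E h)}=\E_{w}\E_{S}\,e^{\lambda(h(w,S)-L_D(w))}
\]
is therefore false; the correct expansion is
\[
\E\,e^{\lambda(h-\E h)}=\E_{w}\!\left[e^{\lambda\left(L_D(w)-\E_{w}[L_D(w)]\right)}\,\E_{S}\,e^{\lambda\left(h(w,S)-L_D(w)\right)}\right],
\]
and the outer factor $\E_{w}\!\left[e^{\lambda(L_D(w)-\E_w[L_D(w)])}\right]$ is at least $1$ by Jensen and is not otherwise controlled. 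So $h$ itself need not be $(1/m)$-subgaussian under the product measure, and your transfer step fails whenever $L_D(\cdot)$ is nonconstant over the support of $P_{w_S}$.

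The fix is small and standard: apply your generic Donsker--Varadhan lemma not to $h(w,s)$ but to the per-$w$ centered functional $g(w,s)=L_D(w)-\frac1m\sum_{i=1}^m f(w,z_i)$. For each fixed $w$, $g(w,S)$ has mean zero under $S\sim D^m$ and is $(1/m)$-subgaussian by Hoeffding, and since this MGF bound is uniform in $w$ and an independent $\tilde S$ is paired with $\tilde w$ under the product measure, we get $\E\,e^{\lambda g(\tilde w,\tilde S)}\le e^{\lambda^2/(2m)}$ with $\E[g(\tilde w,\tilde S)]=0$, i.e.\ $g$ is genuinely $(1/m)$-subgaussian under $P_{w_S}\otimes P_S$. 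Since $\E_{P_{w_S}\otimes P_S}[g]=0$ while $\E_{P_{w_S,S}}[g]$ is exactly the generalization gap, your lemma gives the bound $\sqrt{2\,I(w_S;S)/m}$ directly; the rest of your argument, including the constant bookkeeping for $f\in[-1,1]$, is correct.
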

%As a corollary, any algorithm $A$ the has none trivial empirical error and leaks little information over the sample, will also generalize. In particular, we can relate the empirical error of an algorithm $A$ to its expected loss as follows:
%\begin{equation} \E[\Delta (w_S)] \le \sqrt{\frac{2 I(w_S,S)}{m}} + \E\left[\frac{1}{m} \sum_{i=1}^m f(w_S,z_i) - \sum_{i=1}^m f(w^\star,z_i)\right].\end{equation}
\paragraph{Remark on Continuous Algorithms} 
As stated, we focus here on the case of algorithms whose output is discrete, and we also assume that the sample is drawn from a discrete set. 
Regarding the sample, since we care about lower bounds and our constructions assume a discrete set, there is no loss of generality here.
Regarding the algorithm's output, in the setup of SCO there is also no loss of generality in assuming the output is discrete. Indeed, we can show that if there exists a continuous algorithm with sample complexity $m_0(\epsilon)$, and bounded mutual information over the sample, then there exists also a discrete algorithm with sample complexity $m(\epsilon)= m_0(O(\epsilon))$ with even less mutual information.

To see that notice that, since we care about minimizing a Lipschitz loss function, given any accuracy $\epsilon$, we can take any finite $\epsilon$-approximation subset of the unit ball and simply project our output to this set. Namely, given output $w_S$, where $S>m(\epsilon)$, we output $\bar w_S$, the nearest neighbour in the $\epsilon$-approximation sub set. Because $L_D$ is $O(1)$ Lipschitz, we have that
\icmlversion{
\begin{align*} |L_D(&\bar w^A_S)- L_D(w^\star)| \\&\le |L_D(\bar w^A_S)-L_D( w^A_S)| + |  L_D( w^A_S)- L_D(w^\star)| \\&\le 2\epsilon,\end{align*}
}{
\[  |L_D(\bar w^A_S)- L_D(w^\star)| \le |L_D(\bar w^A_S)-L_D( w^A_S)| + |  L_D( w^A_S)- L_D(w^\star)| \le O(\epsilon),\]}
hence up to a constant factor the algorithm has the same learning guarantees. On the other hand, due to data processing inequality:
\[ I(\bar w^A_S, S) \le I(w^A_S,S).\]
\subsection{Main Result}
\cref{eq:xu} provides a bound over the difference between the expected loss of the output parameter and the empirical loss. Without further constraints, it is not hard to construct an algorithm that carries little information on the sample. But, to obtain a bound over the excess risk of the parameter $w_S$, one is required not only to obtain a generalization error gap but also to non-trivially bound the empirical risk. The following result shows that requiring both has its limits:

\begin{theorem}\label{thm:main}
For every $0<\epsilon<1/54$ and algorithm $A$, with sample complexity $m(\epsilon)$, there exists 
a distribution $D$ over a space $\Z$, and loss function $f$, $1$-Lipschitz and convex in $w$, such that, if $|S|\ge m(\epsilon)$ then:
\begin{equation}\label{eq:main} I(w_S;S)\ge \sum_{i=1}^m I(w_S,z_i) = \tilde \Omega\left(\frac{d}{\epsilon^5 \cdot m^6(\epsilon)}\right).\end{equation}
\end{theorem}
\cref{thm:main} accompanies the upper bound provided in \cref{eq:xu} and shows that, while the generalization gap is bounded by the mutual information, the mutual information inversely scales with the optimality of the true risk. Taken together, for any algorithm with non-trivial learning guarantees, there is at least one scenario where it must carry a dimension-dependent amount of information on the sample or require a large sample. Indeed, either $m(\epsilon)=\Omega(\sqrt[6]{d/\epsilon^5})$, (and then, trivially, the sample complexity of the algorithm scales with the dimension) or, via \cref{eq:main}, we obtain dimension dependent mutual information, and in turn, \cref{eq:xu} is non-vacuous only if the sample is larger than the dimension. 

The first inequality is standard and follows from standard chain rule argument (see e.g. \cite[proposition 2]{bu2020tightening}). The second inequality lower bounds the information with the individual samples. Recently, \cite{bu2020tightening} obtained a refined bound that improves over \cite{xu2017information} by bounding the generalization with the information between the individual samples and output. \cref{thm:main} shows that the individual sample bound of \cite{bu2020tightening} can also become dimension dependent. 
\ignore{
\cref{thm:main} is an immediate corollary of the following slightly stronger version: Recall that a function $f$ is called strongly convex if $f-\frac{1}{2}\|w\|^2$ is convex (see \cite{hazan2016introduction}). Similar to the standard sample complexity, we will say that an algorithm has sample complexity $m_{s}(\epsilon)$ w.r.t strongly convex functions, if for every distribution $D$ supported on strongly convex functions we have that if $|S|>m(\epsilon)$:
\[ \E[\err(w_S)] < \epsilon.\]

\begin{theorem}\label{thm:main2}
For every algorithm $A$, with sample complexity $m_s(\epsilon)$ with respect to strongly convex functions, there exists 
a distribution $D$ over a space $\Z$, and loss function $f$, $O(1)$-Lipschitz, strongly convex in $w$, such that for every $\epsilon<1/54$, if $|S|\ge m_s(\epsilon)$ then:
\begin{equation}\label{eq:main2} I(w_S;S) = \tilde \Omega\left(\frac{d}{ \left(\epsilon\cdot m_s(\epsilon)\right)^5}\right).\end{equation}
\end{theorem}
Assuming the distribution $D$ is supported on strongly convex functions can be regarded as a distributional assumption, which indeed leads to improved sample complexity. Specifically, it is known that when the distribution is supported on strongly convex functions, the sample complexity of any ERM is $m_s(\epsilon)<1/\epsilon$. Note that for any such algorithm, \cref{thm:main} implies a that the information of the algorithm on the data is at least \emph{linear}.}
\section{Discussion}\label{sec:discussion}
Our main result shows \emph{a necessary} condition on the mutual information between the model and empirical sample. \cref{thm:main} shows that for any algorithm with non-trivial learning guarantees, there is at least one scenario where it must carry a dimension-dependent amount of information on the sample or require a large sample. A natural question, then, is whether natural structural assumptions may circumvent the lower bound and allow to still maintain meaningful information theoretic bounds in slightly different setups. To initiate a discussion on this we begin by looking deeper into the concrete construction at hand in \cref{thm:main}. We notice (see \cref{sec:technical,prf:main}) that the construction we provided for \cref{thm:main} relies on a distribution $D$ that is always supported on functions of the form:

\begin{equation}\label{eq:structure} f(w,z) = \|w-z\|^2 = \|w\|^2 -2 w\cdot z + 1, \quad z\in \{-1/\sqrt{d},1/\sqrt{d}\}^d.\end{equation}
The constant $1$ has no effect over the optimization, nor on the optimal solution,  therefore we can treat $f$ as equivalent to the following function
\[ f \equiv \|w\|^2 - 2 w \cdot z.\]

The distribution over the element $z$ is also quite straightforward and involves only bias sampling of the coordinates. The function $f$, then, is arguably the simplest non-linear convex function that one can think of and as we further discuss it holds most if not all of the niceties a function can hold that allow fast optimization -- it is strongly convex, smooth, and in fact enjoys generalization bounds that can even be derived using standard uniform convergence tools. Indeed for any choice $w_S$ 
\begin{align*} \E_{S\sim D^m } \left[L_D(w_S) - \frac{1}{m} \sum_{i=1}^m f(w_S,z_i)\right]  &= 
 2\E_{S\sim D^m }\sup_{\|w\|\le 1} \left[\frac{1}{m} \sum_{i=1}^m w\cdot z_i -\E_{z\sim D} [w\cdot z]\right] \\ 
 & \le O\left(1/\sqrt{m}\right)
 \end{align*}
Where the last inequality follows from a standard Rademacher bound over the complexity of linear classifiers (see \cite{shalev2014understanding}). In other words, while under further structural assumptions one might hope to obtain meaningful information theoretic bounds, it should be noted that such structural assumptions must exclude a highly simplistic class of functions that are in fact extremely \emph{easy} to learn and even enjoy dimension independent uniform convergence bounds.

We now discuss further the niceties of this function class and the implications to refined algorithmic/distributional-dependent generalization bound via the information-theoretic bounds. We begin by analyzing algorithms that achieve the minimax rate. 

\paragraph{Algorithmic-dependent generalization bounds} As discussed, it is known that SGD \cite{hazan2016introduction}, regularized-ERM \cite{shalev2009stochastic, bousquet2002stability}, as well as stabilized versions of Gradient Descent \cite{bassily2020stability} have the following minmax rate for learning $O(1)$-Lipscthiz convex functions over an $O(1)$-bounded domain: 

\[ \E\left[\err(w_S)\right] = O(1/\sqrt{m}).\]

Plugging the above in \cref{eq:main} we obtain that any such algorithm \emph{must} carry at least $\Omega(d/m^{7/2})$ bits of information. which entails an information theoretic bound in \cref{eq:xu} of $O(\sqrt{d}/m^{9/4})$. This exceeds the true excess risk of such algorithms when $d \gg m$ and becomes a vacuous bound when we don't assume the sample scales with the dimension (even though the algorithm perfectly learns in this setup). Now, we do not need \cref{thm:main} to obtain generalization gaps over these concrete algorithms, as a more direct approach would do. But, one might hope that by analyzing noisy versions of these algorithms, or some other forms of information-regularization, we could obtain some insight on the generalization of these algorithms. But, our lower bound applies to any algorithm with meaningful information-theoretic generalization bound. In particular, adding noise, for example, either makes the algorithm diverge or the noise is too small to delete enough information.

\paragraph{Distributional-dependent generalization bounds} 

Next, we would like to discuss the implications to distributional assumptions. The above discussion shows that any trait of an algorithm that makes it optimal is not captured by the amount of information it holds on the data (in the setup of SCO). 
Distinct from SCO, in practice, many of the underlying problems can be cast into binary classification where it is known \cite{blumer1989learnability} that without further distributional assumptions learnability cannot be separated from uniform convergence as in SCO.
An interesting question, then, is if information theoretic generalization bounds can be used to obtain \emph{distribution-dependent} bounds.

The function $f(w,z)$ in \cref{eq:structure} is known to be \emph{strongly-convex} for every $z$. Recall that a function $f$ is called $1$-strongly convex if $f-\frac{1}{2}\|w\|^2$ is convex. It is known \cite{shalev2009stochastic} that any ERM that that learns a distribution supported on strongly convex functions will achieve excess-risk:
\[ \E[\Delta_D(w_S)] = O(1/m).\]

Moreover, the above result can be even strengthened to any approximate empirical risk minimizer that has excess empirical risk of $\Theta(1/m^2)$ \cite{shalev2009stochastic, amir2021sgd}. But even further, for the particular structure of $f$, which is a regularized linear objective, by \cite[Thm 1]{sridharan2008fast} we have:
\[ \E[\Delta_D(w_S)] \le  \tilde O(\E[\Delta_S(w_S)]+1/m).\]
Together with \cref{thm:main} we obtain the following algorithmic-independent result:
\begin{theorem}\label{thm:main2}
There exists a family of distributions $\mathcal{D}$, such that for any algorithm $A$ and $m>3$, there exists a distribution $D\in \mathcal{D}$ such that if $\Delta_S(w_S) \le 1/54$ and $|S|>m$ then :
\[ I(w_S,S) = \tilde \Omega \left(\frac{d}{m^6\cdot (\E[\Delta_S(w_S)]+1/m)^5}\right),\]

but for any algorithm $A$ and distribution $D\in \mathcal{D}$:
\[ \E[\Delta_D(w_S)] = \tilde O\left(\E[\Delta_S(w_S)]+1/m\right).\]
\end{theorem}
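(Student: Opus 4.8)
The plan is to obtain the theorem by combining two facts already in hand: the \emph{proof} of \cref{thm:main}, and the fast-rate bound of \citet{sridharan2008fast} for regularized linear objectives quoted just above. I would take $\mathcal{D}$ to be exactly the family of hard distributions produced by the construction behind \cref{thm:main} --- those supported on losses $f(w,z)=\|w-z\|^2$ with $z\in\{-1/\sqrt d,\,1/\sqrt d\}^d$ as in \eqref{eq:structure}. Since each such loss equals, up to an additive constant, the $1$-strongly-convex regularized linear objective $\|w\|^2-2w\cdot z$, the family $\mathcal{D}$ falls within the scope of \cite[Thm~1]{sridharan2008fast}, which yields immediately the second (positive) assertion: for every $D\in\mathcal{D}$ and every algorithm, $\E[\Delta_D(w_S)]\le\tilde O(\E[\Delta_S(w_S)]+1/m)$.

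For the lower bound I would set the accuracy parameter to the right-hand side of the fast-rate bound: letting $\delta$ denote the expected empirical suboptimality of the given algorithm over $\mathcal{D}$ at sample size $m$, put $\epsilon:=\tilde O(\delta+1/m)$. The hypothesis $\Delta_S(w_S)\le 1/54$ (read in expectation) together with $m>3$ ensures, after fixing the absolute constants, that $\epsilon$ lies in the admissible range $(0,1/54)$ of \cref{thm:main}. By the positive assertion, on every $D\in\mathcal{D}$ the algorithm attains true suboptimality at most $\epsilon$ once $|S|>m$; that is, \emph{restricted to $\mathcal{D}$}, the algorithm learns to accuracy $\epsilon$ with $m$ samples. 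I would then rerun the proof of \cref{thm:main} essentially verbatim: it selects a distribution $D^\star\in\mathcal{D}$ and certifies $I(w_S;S)=\tilde\Omega(d/(\epsilon m)^5)$, and --- this is what makes the argument go through unchanged --- that proof never queries the learner on a distribution outside $\mathcal{D}$, so the accuracy guarantee we derived over $\mathcal{D}$ is all it consumes. Substituting $\epsilon m=\tilde O(m\delta+1)$ into the denominator gives $I(w_S;S)=\tilde\Omega(d/(m\delta+1)^5)$, which is the first displayed inequality once $\delta$ is identified with $\E[\Delta_S(w_S)]$.

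The main obstacle --- essentially the only non-routine step --- is precisely that last identification, that is, the interface between \cref{thm:main} and the present statement. \cref{thm:main} is phrased through a learner's \emph{worst-case} sample complexity and through the \emph{true} suboptimality, whereas here we need a bound against a distribution from the \emph{fixed} family $\mathcal{D}$, expressed through the \emph{empirical} suboptimality $\E[\Delta_S(w_S)]$. The fast-rate bound is the bridge on the accuracy side --- it upgrades ``small $\E[\Delta_S(w_S)]$'' to ``small $\E[\Delta_D(w_S)]$ uniformly over $\mathcal{D}$'' --- but one must still match the particular $D^\star$ output by \cref{thm:main} with the $\E[\Delta_S(w_S)]$ in the bound: either by reading that quantity as the supremum over $\mathcal{D}$ (so that the empirical suboptimality on $D^\star$ is controlled by $\delta$), or by checking that the fingerprinting argument inside \cref{thm:main} localizes to a single $D^\star$ whose empirical suboptimality is itself $\Theta(\delta)$. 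The remainder --- tracking polylogarithmic and constant factors through the substitution $\epsilon m\mapsto m\E[\Delta_S(w_S)]+1$ so that they are absorbed into the $\tilde\Omega$ and the $1/54$ threshold, and confirming that all hard instances in the construction are genuinely of the form \eqref{eq:structure} --- is bookkeeping.
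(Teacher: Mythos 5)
Your proposal matches the paper's approach exactly: the paper offers no separate proof of \cref{thm:main2}, merely asserting in the preceding discussion that the result follows by combining \cref{thm:main} with the fast-rate bound \(\E[\Delta_D(w_S)] \le \tilde O(\E[\Delta_S(w_S)]+1/m)\) of \cite[Thm 1]{sridharan2008fast} applied to the family of strongly-convex losses \eqref{eq:structure}. Your reconstruction is in fact more explicit than the paper's one-line argument, and the interface subtlety you flag (that \cref{thm:main} is phrased via worst-case sample complexity over all distributions, while here one only controls the learner on the fixed family \(\mathcal{D}\), and that the \(\E[\Delta_S(w_S)]\) in the bound must be reconciled with the particular \(D^\star\) singled out by the fingerprinting/Paley--Zygmund argument) is real and is silently glossed over by the paper; your observation that the proof of \cref{thm:main} only ever queries the learner on distributions of the form \(D(p)\in\mathcal{D}\) is the correct reason the argument goes through.
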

In other words, even without algorithmic assumptions, we can construct a class of distributions which make the problem \emph{easy to learn}, but the information bounds are still dimension dependent. In particular, for any algorithm such that $\Delta_S(w_S) = O(1/m)$ we will have that the bound in \cref{eq:xu} is order of $\Omega\left(\sqrt{d}/m\right)$. A slightly more fine-grained analysis (see \cref{apx:finegrain}) we can even show that, through \cite{xu2017information} information theoretic bounds, the information bound can, at best, provide a generalization bound of order:
\begin{equation}\label{eq:infortonaive} \E[\Delta_D(w_S)] = \tilde O\left(\sqrt{d/m}\right).\end{equation}

\paragraph{Comparison to uniform convergence bounds:} Notice that \cref{eq:infortonaive} is the standard generalization bound that can be obtained for \emph{any} ERM algorithm in the setting of stochastic convex optimization. In detail, through a standard covering number argument (\citealp[thm 5]{shalev2009stochastic})  it is known that, when $f$ is $1$-Lipschitz (not necessarily convex even):
\[\sup_{w\in \W} \E_S\left[L_D(w) - \frac{1}{m}\sum_{i=1}^m f(w,z_i)\right] \le \tilde O(\sqrt{d/m}).\]

In other words, the bound we obtain in \cref{eq:infortonaive} can be obtained for \emph{any} algorithm, with minimal assumptions, irrespective of the mutual information between output and sample.
We remark, though, that, through a similar covering argument, one can show that indeed there are algorithms where one can recover the above bound via information-theoretic reasoning.
\begin{proposition}\label{thm:observation}
Given a $1$-Lipschitz function $f$, there exists an algorithm $A$ that given input sample $S$ outputs a parameters $w_S\in \W$: such that
\begin{equation}\label{eq:erm} \frac{1}{m}\sum_{i=1}^m f(w_S,z_i) \le \min_{w^\star \in \W} \frac{1}{m} \sum_{i=1}^m f(w^\star,z_i)+\sqrt{\frac{d}{m}},\end{equation}
and
\[ I(w_S,S) = \tilde O(d\log m).\]
In particular,
\[ L_D(w_S)-L_D(w^\star) = \tilde O(\sqrt{d/m}).\]
\end{proposition}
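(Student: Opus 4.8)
The plan is to run a discretized empirical risk minimizer over a fixed net of the ball. Fix $\epsilon=\sqrt{d/m}$ and let $N_\epsilon\subseteq\W$ be a minimal Euclidean $\epsilon$-cover of the unit ball, so that $|N_\epsilon|\le(3/\epsilon)^d$. On input $S=\{z_1,\dots,z_m\}$ the algorithm $A$ returns $w_S=\argmin_{w\in N_\epsilon}\frac1m\sum_{i=1}^m f(w,z_i)$, with ties broken by a fixed rule so that $A$ is deterministic. (The case $m<d$ is trivial: then $\epsilon\ge1$, $N_\epsilon$ is a single point, $I(w_S;S)=0$, and since a $1$-Lipschitz loss oscillates by at most $2$ over the unit ball the bound $\sqrt{d/m}\ge1$ is already vacuous; so assume $m\ge d$.)

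First I would check the empirical guarantee \cref{eq:erm}. Let $w^\star_{\mathrm{emp}}$ minimize the empirical loss over $\W$ and pick $\bar w\in N_\epsilon$ with $\|\bar w-w^\star_{\mathrm{emp}}\|\le\epsilon$; since every $f(\cdot,z_i)$ is $1$-Lipschitz, $\frac1m\sum_i f(\bar w,z_i)\le\frac1m\sum_i f(w^\star_{\mathrm{emp}},z_i)+\epsilon$, and $w_S$ is empirically no worse than $\bar w$, which gives \cref{eq:erm} with slack $\epsilon=\sqrt{d/m}$. The information bound is equally quick: $w_S$ is a deterministic function of $S$ supported on the finite set $N_\epsilon$, hence $I(w_S;S)\le H(w_S)\le\log_2|N_\epsilon|\le d\log_2(3/\epsilon)=O(d\log(m/d))=\tilde O(d\log m)$.

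For the ``in particular'' bound I would combine \cref{eq:erm} with uniform convergence over the finite set $N_\epsilon$. After subtracting the $z$-dependent constant $f(0,z)$ — which alters neither the empirical nor the population minimizer nor $\err$, and preserves \cref{eq:erm} — we may assume $f(\cdot,z)\in[-2,2]$ on $\W$. A standard maximal inequality over the $|N_\epsilon|$ fixed functions $\{f(w,\cdot):w\in N_\epsilon\}$ then yields $\E_S\big[\,\sup_{w\in N_\epsilon}\big(L_D(w)-\tfrac1m\sum_i f(w,z_i)\big)\big]=\tilde O\big(\sqrt{\log|N_\epsilon|/m}\big)=\tilde O(\sqrt{d/m})$ (equivalently, one may quote the covering-number uniform convergence bound over the whole ball recalled above). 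Now fix $w^\star\in\argmin_{w\in\W}L_D(w)$ and, adding and subtracting $\frac1m\sum_i f(w_S,z_i)$ and $\frac1m\sum_i f(w^\star,z_i)$, bound $\E_S[\err(w_S)]=\E_S[L_D(w_S)-L_D(w^\star)]$ by the sum of (i) $\E_S[L_D(w_S)-\frac1m\sum_i f(w_S,z_i)]$, (ii) $\E_S[\frac1m\sum_i f(w_S,z_i)-\frac1m\sum_i f(w^\star,z_i)]$, and (iii) $\E_S[\frac1m\sum_i f(w^\star,z_i)-L_D(w^\star)]$. Term (i) is $\tilde O(\sqrt{d/m})$ by the maximal inequality; term (ii) is at most $\epsilon=\sqrt{d/m}$ by \cref{eq:erm}, since $w^\star$ is feasible for the empirical problem; term (iii) vanishes because $w^\star$ is nonrandom. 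Hence $\E_S[\err(w_S)]=\tilde O(\sqrt{d/m})$.

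I do not anticipate a real obstacle: this is the textbook covering-number construction, so the only things needing a word of care are the reduction to a bounded loss via the $z$-dependent shift (so the concentration over $N_\epsilon$ applies without a separate boundedness hypothesis) and checking that the choice $\epsilon=\sqrt{d/m}$ simultaneously makes the empirical slack $\le\sqrt{d/m}$ and keeps $\log|N_\epsilon|=\tilde O(d)$; if one wants a slightly cleaner statement one can instead take $\epsilon=1/m$, which removes the need for the $m\ge d$ aside at the cost of an extra $\sqrt{\log m}$ absorbed in the $\tilde O$.
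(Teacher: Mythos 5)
Your proposal is correct and follows essentially the same route as the paper's proof sketch: an ERM restricted to a finite $\epsilon$-net of the ball, the empirical guarantee from Lipschitzness, the information bound from $I(w_S;S)\le H(w_S)\le\log|N_\epsilon|$, and the final generalization bound from uniform convergence over the finite net. You fill in a few details the paper leaves implicit (the $m<d$ case, the $z$-dependent shift to get a bounded loss), but the underlying argument is the same.
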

\begin{proof}[Sketch]
The result follows a simple covering argument. In particular, it is known that there exists a finite subset $\bar \W\subseteq \W$ of size $|\bar \W| = O \left(\sqrt{m}^d\right)$ such that for every $w\in \W$ there is $\bar w \in \bar \W$ such that $\|w-\bar w\|\le \sqrt{d/m}$ (e.g. \cite{wu2017lecture}).
Now. we consider an ERM that is restricted to the set $\bar W$. By Lipschitness we have that \cref{eq:erm} holds. The information is bounded by the entropy of the algorithm and we have that
\[ I(W_S,S) \le H(W_S) \le \log |\bar \W| = O(d\ log m).\]
The genearlization gap can be bounded via \cref{eq:xu} (or standard union bound).
\end{proof}
\ignore{\paragraph{Implication to Compression vs. Memorization} A very interesting conondrum that developed in modern ML is to understand the tension between compression and memorization, and the role each play in learning. The connection between compression and generalization is a long-studied principle in learning \cite{littlestone1986relating, floyd1995sample, ashtiani2018nearly, graepel2005pac}, but in recent years, there is growing evidence ,empirical and theoretical, \cite{feldman2020does,carlini2021extracting} that memorization of data might be necessary for learning in certain situations. Little information relates to a type of compression but to properly compare, we need to measure how much information is required in order to memorize. 

For our construction, we assume the data is, roughly i.i.d and random in $\{0,1\}^d$, which roughly means that enropy of the distribution is order of $O(d)$. In other words, $d$ bits of information can mean the memorization of a single point. Therefore, we do not know if memorization is necessary in the setup of SCO. This raises the following interesting question:
\begin{question}
Is there a distribution, $D$, over $\Z$, and a $1$-Lipschitz convex function $f(w,z)$ such that if $w_S$ observe $m$ i.i.d copies of a random variable $Z_1,\ldots, Z_m$ distirbuted according to $D$

such that any algorithm 
\end{question}
}
\paragraph{CMI-bounds}
Similarly to our setting, also in the setting of PAC learning, \citet{livni2020limitation, bassily2018learners} provided limitations to information theoretic generalization bounds. Specifically, they showed that such bounds become vacous in the task of learning thresholds, and the information between output and sample may be unbounded. To resolve this issue, which happens because of the numerical precision required by a thresholds learner, \citet{steinke2020reasoning} introduced generalization bounds that depend on \emph{conditional mutual information} (CMI). They provided CMI bounds that can be derived from VC bounds, compression bounds etc... which in a nutshell means that they are powerful enough to achieve tight learning rates for VC classes such as thresholds. However, the issue in SCO is not comparable. As we show in \cref{thm:observation} already the classical information-theoretic bounds are powerful enough to demonstrate generalization bounds that can be derived via union bound or uniform convergence. In PAC learning, these bounds are also tight, but in SCO such bounds are dimension-dependent. In that sense, there is no analog result to the limitation of learning thresholds. Partly because there is no need for infinite precision in SCO. In SCO, though, we require a separation from uniform convergence bounds, or even more strongly - from dimension dependent bounds. While \cite{haghifam2020sharpened} does demonstrate certain algorithmic-dependent limitations for GD and perturbed GD algorithms, one might hope that, similar to PAC learning, here too CMI-bounds might be able to capture optimal dimension-independent rates for some algorithms.

More formally, given a distribution $D$ over $\Z$, we consider a process where we draw i.i.d two random samples $Z=S_1\times S_2$, where  $S_1= \{z^0_1,\ldots, z^0m\}\sim D^m$ and $S_2 = \{z^1_1,\ldots, z^1_m\}\sim D^m$. Then we define a sample $S$ by randomly picking $z_i=z_i^0$ w.p. $1/2$ and $z_i=z_i^1$ w.p. $1/2$ (independently from $z_1,\ldots, z_{i-1},z_{i+1},\ldots, z_m$). Then, \citet{steinke2020reasoning} showed that, similarly to \cref{eq:genbound}, the generalization of an algorithm can be bounded in terms of 
\begin{equation}\label{eq:cmi}\textrm{generalization}(A) = O\left(\sqrt{\textrm{CMI}_m(A)/{m}}\right),\quad   \textrm{CMI}_m(A) = I\pcond{w_S,S}{Z}.\end{equation}
Recall that, given a random variable $Z$, the conditional mutual information between r.v. $X$ and r.v. $Y$ is defined as:

\[I\pcond{X;Y}{Z} = \E_Z\left[\E_{Y}\left[\dkl{P_{X|Z}}{P_{X|Y,Z}}\right]\right].\]

One can show that $\textrm{CMI}_m(A)= O(m)$. 
The simplest way to achieve a dimension independent generalization bound would be to subsample. If algorithm $A$ observes a sample $S$ of size $m$ and subsamples $f(m)$ samples, then we trivially obtain the generalization gap: $O(1/\sqrt{f(m)}$. Taking \cref{eq:cmi} into consideration by subsampling $O(\sqrt{m})$ examples, \cref{eq:cmi} will lead to a dimension independent generalization bound of $O(1/\sqrt[4]{m})$. So it is possible to obtain dimension independent bounds through the CMI framework, nevertheless it is stll not clear if we can beat the above, nonoptimal and naive subsampling bound

\begin{question}
 Is there an algorithm $A$ in the SCO setup that can achieve 
 \[ \textrm{CMI}_m(A) = \tilde o(\sqrt{m}),\] as well as
 \[ \E\left[\err(A)\right]= O(1/\sqrt{m}).\]
\end{question}
\ignore{We next provide the following result which is slightly weaker than \cref{thm:main}, yet still shows that CMI bounds are inherently dimension dependent. It would be interesting to further investigate whether the CMI bounds have indeed better-behaved dimension-dependence.

\begin{theorem}\label{thm:main3}
Let $A$ be any algorithm that for sufficiently large sample $S$ returns a solution $w_S$ such that
\[ \E_{S}[\err(w_S)] \le 1/54,\] then there exists a distribution $D$ over $Z$ such that:
\[ I\pcond{w_S,S}{\Z} = \Omega\left(d/|S|^2\right).\]
\end{theorem}
The proof of \cref{thm:main3} is provided in \cref{prf:main3}, and is quite similar in spirit to the proof of \cref{thm:main}. \cref{thm:main3} has interesting implications also to certain notions of stability \cite{steinke2015interactive} as well as to compression schemes \cite{littlestone1986relating}. Roughly, $k$-compression schemes measure the generalization of algorithms that base their output model on $k$ examples. More accurately, we treat algorithm $A$ as a compression scheme if its decision can be defined by a, $\kappa$, compressing function that accepts a sample $S$ and returns $S_k$ a subsample of size $k$ and a reconstruction function $\rho$ that accepts a subsample $S_k$ and outputs $w_S$. While compression schemes are a well-studied topic in the classification model \cite{littlestone1986relating,floyd1995sample,chalopin2022unlabeled,livni2013honest,moran2016sample}, to the best of the author's knowledge it was unknown if there are compressing algorithms for stochastic convex optimization. However, \citet{steinke2020reasoning} showed that compression schemes entail a CMI bound. Hence, we deduce that any compression scheme in SCO has to be dimension dependent.}

\section{Technical overview}\label{sec:technical}
We next outline the key technical tools and ideas that lead to the proof of \cref{thm:main}. A full proof of \cref{thm:main} is provided in \cref{prf:main}. As discussed we draw our idea from the privacy literature \cite{steinke2015interactive, kamath2019privately, bun2014fingerprinting} and build on fingerprinting Lemmas \cite{boneh1998collusion, tardos2008optimal} to construct our ``information attacks". We employ a simplified and well-tailored version of the fingerprinting Lemma, due to \cite{kamath2019privately}, to construct a lower bound on the correlation between the output and the data. From that point our proof differ from standard privacy attacks.

Given the above correlation bound, we now want to lower bound the mutual information between the two correlated random variables (i.e. the output of the algorithm and the empirical mean). Surprisingly, we could not find in the literature an existing lower bound on the mutual information between two correlated random variables. The next Lemma provides such a lower bound and as such may be of independent interest..
We next depict these two technical Lemmas that we need for our proof -- the fingerprinting Lemma due to \cite{kamath2019privately} and a lower bound on the mutual information between two correlated random variables.

For describing the fingerprinting Lemma, let us denote by $U[-1/3,1/3]$ the uniform distribution over the interval $[-1/3,1/3]$, and given $p \in [-1/3,1/3]$, we denote by $Z_{1:m}\sim U_m(p)$ a random process where we draw $m$ i.i.d random variables $Z_1,\ldots, Z_m$ where $Z \in \{\pm 1\}$ and $\E[Z]=p$: 
\begin{lemma}[Fingerprinting Lemma (\cite{kamath2019privately})]\label{lem:fingerprinting}
For every $\hat f:\{\pm 1\}^m\to [-\frac{1}{3},\frac{1}{3}]$, we have:
\icmlversion{
\begin{align*} \E\left[\frac{1-9P^2}{9-9P^2}\cdot (f(Z_{1:m})-P)\cdot \sum_{i=1}^m(X_i-P)\right.\\
\left.\phantom{\frac{1-9P^2}{9-9P^2}(f(Z_{1:m}-P)\cdot \sum} + (f(Z_{1:m})-P)^2\right]\ge \frac{1}{27}\end{align*}
}{
\[ \E_{P\sim U[-1/3,1/3]}\E_{Z_{1:m}\sim U_m(P)}\left[\frac{1-9P^2}{9-9P^2}\cdot (\hat f(Z_{1:m})-P)\cdot \sum_{i=1}^m(Z_i-P) + (\hat f(Z_{1:m})-P)^2\right]\ge \frac{1}{27}\]}
\end{lemma}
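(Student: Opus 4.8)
The plan is to average out the sample and reduce the statement to a one–dimensional calculus identity. Fix $f\colon\{\pm1\}^m\to[-1/3,1/3]$ and set $g(p)=\E_{\xm\sim U_m(p)}[f(\xm)]$. Since each coordinate is an independent $\pm1$ variable with $\P(X_i=x)=(1+px)/2$, the quantity $g(p)$ is a polynomial in $p$ of degree at most $m$, hence smooth on $[-\tfrac13,\tfrac13]$ and differentiation under the expectation is unproblematic. The first and central step is the ``fingerprinting identity'': using $x^2=1$ one has $\partial_p\log\P(X_i=x)=\frac{x}{1+px}=\frac{x-p}{1-p^2}$, so differentiating $g$ gives
\[ g'(p)=\frac{1}{1-p^2}\,\E_{\xm\sim U_m(p)}\!\left[f(\xm)\sum_{i=1}^m(X_i-p)\right]. \]
Because $\E_{U_m(p)}\big[\sum_i(X_i-p)\big]=0$ we may replace $f(\xm)$ by $f(\xm)-p$ at no cost, obtaining $\E_{U_m(p)}\big[(f(\xm)-p)\sum_i(X_i-p)\big]=(1-p^2)\,g'(p)$.

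Next I would simplify the random weight: since $9-9P^2=9(1-P^2)$, we get $\frac{1-9P^2}{9-9P^2}\cdot(1-P^2)=\frac{1-9P^2}{9}$, so the expectation in the lemma equals
\[ \E_{P\sim U[-1/3,1/3]}\!\left[\tfrac{1-9P^2}{9}\,g'(P)\right]+\E_{P}\,\E_{\xm\sim U_m(P)}\!\left[(f(\xm)-P)^2\right]. \]
For the first term I integrate by parts against the density $3/2$ of $U[-\tfrac13,\tfrac13]$; since $\tfrac{d}{dp}\tfrac{1-9p^2}{9}=-2p$ and $1-9p^2$ vanishes at $p=\pm\tfrac13$, the boundary terms disappear and $\E_P\big[\tfrac{1-9P^2}{9}g'(P)\big]=\E_P[2P\,g(P)]$. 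For the second term I use the variance decomposition $\E_{U_m(P)}[(f(\xm)-P)^2]=\E_{U_m(P)}[(f(\xm)-g(P))^2]+(g(P)-P)^2\ge(g(P)-P)^2$.

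Combining the two pieces, the left-hand side is at least $\E_P\big[2P\,g(P)+(g(P)-P)^2\big]=\E_P\big[g(P)^2+P^2\big]\ge\E_P[P^2]=\tfrac{3}{2}\int_{-1/3}^{1/3}p^2\,dp=\tfrac{1}{27}$, which is exactly the claimed bound. The only genuinely delicate point is the fingerprinting identity in the first paragraph: one must justify differentiating the expectation in $p$ (immediate here, as $g$ is a polynomial) and carry out the algebraic simplification $\frac{x}{1+px}=\frac{x-p}{1-p^2}$ that makes the centered increments $X_i-P$ appear. Everything afterward is bookkeeping, with the weight $\frac{1-9P^2}{9-9P^2}$ and the interval $[-\tfrac13,\tfrac13]$ chosen precisely so that the integration by parts has vanishing boundary terms and the residual $\E_P[P^2]$ comes out to the stated constant $\tfrac1{27}$.
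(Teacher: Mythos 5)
The paper does not give its own proof of this lemma; it quotes it directly from Kamath, Levy, Singhal and Steinke (2019), and your argument is in fact essentially their proof. Every step checks out: the score-function identity $\partial_p\log\mathbb{P}(X_i=x)=\frac{x-p}{1-p^2}$ yields $(1-p^2)g'(p)=\E[(f(\xm)-p)\sum_i(X_i-p)]$, the weight $\frac{1-9p^2}{9-9p^2}$ cancels the $(1-p^2)$ factor, integration by parts has vanishing boundary terms since $1-9p^2=0$ at $p=\pm\tfrac13$, the bias--variance split lower bounds the second term by $(g(P)-P)^2$, and completing the square gives $\E_P[g(P)^2+P^2]\ge\E_P[P^2]=\tfrac1{27}$.
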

The above theorem shows that if a random variable $\hat f(Z_{1:m})$ uses the sample to non-trivially estimate the random variable $P$, then the output must correlate with the empirical mean. Notice that, in particular, it means that $\hat f(Z_{1:m})$ is not independent of $Z_{1:m}$ and certain information exists. Our next Lemma quantifies this statement and, as far the author knows, is novel. The proof is provided in \cref{prf:corbounded}

\begin{lemma}\label{lem:corbounded}
Let $X$ and $Y$ be two random variables such that $X$ is bounded by $1$, $\E(X)=0$ and $\E(Y^2)\le 1$. If $\E[XY]=\beta$ then:
\[ \sqrt{I(X,Y)} \ge \frac{\beta^2}{2\sqrt{2}}.\]
\end{lemma}

The proof of \cref{lem:corbounded} is provided in \cref{prf:corbounded}

With \cref{lem:corbounded,lem:fingerprinting} at hand, the proof idea is quite straightforward. For every $z\in \{-1/\sqrt{d},1/\sqrt{d}\}^d$ we define the following loss function:
\begin{equation}\label{eq:thef} f(w,z)=  \sum_{t=1}^d (w(t)  -z(t))^2 = \|w-z\|^2.\end{equation}
For every distribution $D$, one can show that the minimizer $w^\star= \arg\min L_D(w)$ is provided by $w=\E[z]$. By a standard decomposition we can show:
\icmlversion{
\begin{align*} 
L_D(w_S)-L_D(w^\star) 
&\ge \E\left[\|w_S + (w^\star-z)- w^\star\|^2\right.\\&\left.\phantom{\ge\E[}-\|w^\star - z\|^2\right]\\
& =  \E\left[\|w_S-w^\star\|^2\right.\\
&\left.\phantom{= \E[} +2(w_S-w^\star)\cdot(w^\star-z)\right.\\&\left.\phantom{=\E[}+ \|w^\star-z\|^2-\|w^\star - z\|^2\right]\\
& =  \E\left[\|w_S-w^\star\|^2\right]\\
&\phantom{=\E[}+2\E\left[(w_S-w^\star)\cdot(\E[z]-z)\right]\\
& =  \E\left[\|w_S-w^\star\|^2\right] \labelthis{eq:biasvariance}
\end{align*}
}{
\begin{align*} 
L_D(w_S)-L_D(w^\star) 
&= \E\left[\|w_S + (w^\star-z)- w^\star\|^2-\|w^\star - z\|^2\right]\\
& =  \E\left[\|w_S-w^\star\|^2+2(w_S-w^\star)\cdot(w^\star-z)+ \|w^\star-z\|^2-\|w^\star - z\|^2\right]\\
& =  \E\left[\|w_S-w^\star\|^2\right] +2\E\left[(w_S-w^\star)\cdot(\E[z]-z)\right]\\
& =  \E\left[\|w_S-w^\star\|^2\right] \labelthis{eq:biasvariance}
\end{align*}}
Now, for simplicity of this overview we only consider the case that $\Delta(w_S)=\Omega(1)$ is some non-trivial constant, for example we may assume that $\Delta(w_S)<1/54$ and let us show that for every coordinate, $t$, the mutual information between $w_S(t)$ and $\sum z_i(t)$ is order of $\Omega(1/m^2)$. Then by standard chain rule, and proper derivations the end result can be obtained. 

Indeed, one can observe that if $\Delta(w_S)<1/54$, and since $w^\star= \E[z]=P$ we have that $\E((\sqrt{d} w_S(t)-P(t))^2) \le 1/54$. In turn, we can use \cref{lem:fingerprinting}, and show that in expectation over $P$ we also have:

\[ \E\left[ \left(\sqrt{d}w_S(t)-P\right)\cdot \left(\sum_{i=1}^m \left(\sqrt{d} z_i(t)-P\right)\right)\right] = \Omega(1).\]
We now use \cref{lem:corbounded}, and convexity, to lower bound the individual sum of the mutual information, $\sum_{i=1}^m I(w_s(t); z_i(t)|P)$. 
By standard technique we conclude that there exists $P$ for which the mutual information is bounded.

The above outline does not provide a bound that scales with the accuracy of $\Delta(w_S)$, and we need to be more careful in our analysis for the full derivation. The detailed proof is provided in \cref{prf:main}.

\subsection{Proof of \cref{lem:corbounded}}\label{prf:corbounded}

The proof relies on the following two well-known and useful results, Pinsker's inequality and the coupling Lemma. For that, we define the total variation between two distributions $P_1$ and $P_2$
\[\|P_1- P_2\| = \sup_{E\subseteq \X} \left(\sum_{x\in E} P_1(x)-P_2(x)\right).\]
The first Lemma, relates the total variation distance between two distributions to their KL distance (see for example \cite{duchi2016lecture}):

\begin{lemma*}[Pinsker's inequality]
Let $P_1$ and $P_2$ be two distributions over a finite domain $\X$ then:
\begin{equation}\label{eq:pinsker} \|P_1-P_2\|\le \sqrt{\frac{1}{2}\dkl{P_1}{P_2}}.
\end{equation}
\end{lemma*}
The next Lemma will help us to relate the correlation between two r.v.s and the total variation distance, for a proof we refer to \cite{aldous1983random}:
\begin{lemma*}[Coupling Lemma]
 Suppose $P_1$ and $P_2$ are two distributions over a finite domain $\X$. Then for any distribution $D$ over joint random variables $X_1,X_2$ that take value in $\X$ such that $D_{X_1}=P_1$ and $D_{X_2}=P_2$:
$\|P_1-P_2\|\le D(X_1\ne X_2),$
 further there exists a distribution $D$ as above such that:
  \begin{equation}\label{eq:coupling} \|P_1-P_2\|= D(X_1\ne X_2).\end{equation}
\end{lemma*}
With these two Lemmas at hand, we begin with the proof. Let $P$ be the joint distribution over two random variables as in \cref{lem:corbounded}. For every $y$, let $X_y$ be a random variable that is distributed according to $P_{X|y}$, namely $P(X_y=x)= P(X=x|y=Y)$.

By the coupling Lemma, there exists a distribution $D_y$ over the joint random variables $X$ and $X_y$ such that  
\begin{equation}\label{eq:coup} \|P_X- P_{X|y}\|= D_y(X\ne X_y).\end{equation}
Now, we consider a distribution, $D$, over random variables $X,X_Y, Y$ as follows: first we randomly pick $Y$ according to $P_Y$, then given $y=Y$, we choose $X,X_y$ according to $D_y$. That means, that $X_y$ is distributed according to $P_{X|y}$, and $X$ is distributed according to $P_X$ for every $y=Y$. In particular, $X$ and $Y$ are independent and:
\begin{equation}\label{eq:XYuncor} \E_D (X \cdot Y) = \E[X]\E[Y] =0,\end{equation}

and $X_Y$ and $Y$ are distributed according to $P$ and
\begin{equation}\label{eq:XYYcor} \E_D[X_Y Y]= \beta.\end{equation}
Next, we have that for every $y$, as $X$ and $X_y$ are bounded by $1$:
\icmlversion{\begin{align*} &\E_{D_y} [X^2] + \E_{D_y} [X^2_y]-2\E_{D_y} [X\cdot X_y]\\
&= \E_{D_y}[(X- X_y)^2] 
\\
&=  D_y(X\ne X_y)\E_{D_y}((X-X_y)^2|X\ne X_y)
\\
&\le 4 D_y(X\ne X_y)
\\
&= 4\|D_X-D(\cdot |y=Y)\|
\\
& \le 4 \sqrt{\frac{1}{2}\dkl{D_X}{D(\cdot | y=Y)}} & \textrm{Pinsker's inequality}
\end{align*}}{
\begin{align*} \E_{D_y} [X^2] + \E_{D_y} [X^2_y]-2\E_{D_y} [X\cdot X_y]&= \E_{D_y}[(X- X_y)^2] 
\\
&=  D_y(X\ne X_y)\E_{D_y}((X-X_y)^2|X\ne X_y)
\\
&\le 4 D_y(X\ne X_y)
\\
&= 4\|P_X-P_{X|y}\| &\cref{eq:coup} 
\\
& \le 4 \sqrt{\frac{1}{2}\dkl{P_{X|y}}{P_X}} & \textrm{Pinsker's inequality}
\end{align*}}
Taking expectation over $y$ on both sides, dividing by $2$ and by Jensen's inequality:
\begin{align*} \E_{D}[X^2]-\E_{D}[X\cdot X_Y]& \le 2 \E_{Y}\left[ \sqrt{\frac{1}{2} \dkl{P_{X|y}}{P_X}}\right]\\
&\le \sqrt{2 I(X,Y)} \labelthis{eq:Ilow} 
\end{align*}
Next, we write: 
$ X_Y =  \beta  Y  + \left(\sqrt{\E_D[X_Y^2]-\beta^2}\right)Z_Y,$
 where $Z_Y = \frac{X_Y - \beta Y}{\sqrt{\E_D[X_Y^2] - \beta ^2}}.$

 Notice that \begin{align*}\E[Z_Y^2] &\le \frac{\E[(X_Y - \beta Y)^2]}{\E[X_Y^2] - \beta ^2}
\\&=
\frac{\E[X_Y^2]- 2\beta^2 + \beta^2 \E[Y^2]}{\E[X_Y^2] - \beta ^2} &\cref{eq:XYYcor} \\
&\le
1. &\E[Y^2] \le 1\labelthis{eq:Zbounded}
 \end{align*}
%and \[X= \beta Y_X + \sqrt{\E[X^2]-\beta^2}Z,\] where $Z:=\frac{X-\beta Y_X}{\sqrt{\E[X^2]-\beta^2}}= \frac{X-\beta Y_X}{\sqrt{\E[(X-\beta Y_X)^2]}}$

Now from \cref{eq:XYuncor} and Cauchy Schwartz, we obtain:
\begin{align*} \E_D[X\cdot X_Y] &=  \E_D\left[\left(\sqrt{\E_D[X_Y^2]-\beta^2}\right)X\cdot Z_Y\right] & \cref{eq:XYuncor}\\
&\le 
\sqrt{\E_D[X_Y^2]-\beta^2}\sqrt{\E_D[X^2]\E_D[Z_Y^2]} & \textrm{C.S}\\
&\le \sqrt{\E_D[X^2]-\beta^2}\sqrt{\E_D[X^2]} &\cref{eq:Zbounded}. \labelthis{eq:XXYbounded}
\end{align*}
Plugging the above in \cref{eq:Ilow} we get (where we supress the subscript $D$ in $\E_{D}$):
\icmlversion{\begin{align*} &\sqrt{I(X,Y)} \ge \frac{1}{\sqrt{2}} E[X^2]- \frac{1}{\sqrt{2}} \E[X\cdot X_Y]\\
&\ge
\frac{1}{\sqrt{2}} E[X^2]- \frac{1}{\sqrt{2}} \sqrt{\E[X^2]}\sqrt{\E[X^2]-\beta^2} \\
&=
\sqrt{\frac{\E[X^2]}{2}}\left(\sqrt{\E[X^2]}-\sqrt{\E[X^2]-\beta^2}\right)
\\
& = \frac{\sqrt{\E[X^2]}+\sqrt{\E[X^2]}}{2\sqrt{2}}\left(\sqrt{\E[X^2]}-\sqrt{\E[X^2]-\beta^2}\right)
\\
& \ge \frac{\sqrt{\E[X^2]}+\sqrt{\E[X^2]-\beta^2}}{2\sqrt{2}}\left(\sqrt{\E[X^2]}-\sqrt{\E[X^2]-\beta^2}\right)\\
&\ge \frac{\beta^2}{2\sqrt{2}}.
\end{align*}}
{
\begin{align*} \sqrt{I(X,Y)} &\ge \frac{1}{\sqrt{2}} \E[X_Y^2]- \frac{1}{\sqrt{2}} \E[X\cdot X_Y] & \cref{eq:Ilow}\\
&\ge
\frac{1}{\sqrt{2}} \E[X_Y^2]- \frac{1}{\sqrt{2}} \sqrt{\E[X^2]}\sqrt{\E[X^2]-\beta^2} & \cref{eq:XXYbounded}\\
&=
\sqrt{\frac{\E[X^2]}{2}}\left(\sqrt{\E[X^2]}-\sqrt{\E[X^2]-\beta^2}\right)
\\
& = \frac{\sqrt{\E[X^2]}+\sqrt{\E[X^2]}}{2\sqrt{2}}\left(\sqrt{\E[X^2]}-\sqrt{\E[X^2]-\beta^2}\right)
\\
& \ge \frac{\sqrt{\E[X^2]}+\sqrt{\E[X^2]-\beta^2}}{2\sqrt{2}}\left(\sqrt{\E[X^2]}-\sqrt{\E[X^2]-\beta^2}\right)\\
&\ge \frac{\beta^2}{2\sqrt{2}}.
\end{align*}}
\ignore{
\section{Proof of \cref{thm:main3}}\label{prf:main3}
The proof is very similar to the proof of \cref{thm:main}. The only difficulty stems from the conditioning over the sample $\Z$ which affects the variance of the estimator. For this reason, we obtain a slightly weaker result.

Again, for a vector $p\in \in [-1/3,1/3]^d$ we define a distribution $D(p)$ where $z\sim D(p)$ is such that $z\in \{1/\sqrt{d},\sqrt{d}\}^d$ and each coordinate $z(t)$ is chosen uniformly such that $\E[z(t)]=p(t)/\sqrt{d}$. The loss function is the same as in \cref{eq:thef}, namely:

\[ f(w,z) = \|w-z\|^2.\]

Next, we fix an algorithm $A$ and a positive $\epsilon<1/54$. From the output of the algorithm we construct a concatinated estimator, by letting $\hat p(t)= \min\{\max\{\sqrt{d}w_S(t),1,-1\}\}$. Using the inequality developed in \cref{eq:biasvariance}, we still have that
\begin{equation}\label{eq:genbound}
\E\|\hat p(t)- p(t)\|^2\le 
 d\E\left[L_D(w_S)-L_D(w^\star)\right] \le 
 d\E[\err(w_S)] \le 
d/54
\end{equation}

Next, for fixed $p$ we define two random variables:

%fix a value $\bar Z= Z_{1:{t-1}}$ and $p=P$, we let $w_S(t)$ and $Z_t$ be distributed according to $D(\cdot | p=P,\bar Z= Z_{1:t-1}$, and we define the random variables:
\begin{equation}\label{eq:XpYp} \XX(t)= 4\frac{1-9p^2}{9-9p^2}\sum_{i=1}^m (\sqrt{d}z_i(t)-p), \quad \YY(t) = \frac{1}{4}\left(\hat p(t)-p(t)\right).\end{equation}

Applying \cref{lem:fingerprinting}, the fingerprinting Lemma, we have that if we choose $p$ uniformly and a coordinate $t$ uniformly we have that
\[ \E_{p,t}\left[ \E_{S\sim D^m(p)}\left[
X_p(t)Y_p(t)
\right]
\right]\ge \frac{1}{27}- \E_{p,t}[ \E_{S}(\hat p(t)-p(t)^2)] \ge \frac{1}{27}-\frac{1}{54} \ge \frac{1}{54},
\]
and we have the following second moment bound, via Cauchy Schwartz:
\[ \E_{p,t}\left[\left(\E_{S\sim D^m(p)}[X_p(t)Y_p(t)]\right)^2\right]
\le
 \E_{p,t}\left[\E_{S}[X^2_p(t)]\E_{S}[Y^2_p(t)]\right] \le m/54.
\]
Applying Paley-Zygmund inequality \cite{paley1932note} we obtain that
:

\begin{align}\label{eq:concentration2}
\P_{p,t}\left( \E_{S\sim D^m(p)}\left[
X_p(t)Y_p(t)\right] \ge \frac{1}{2}\cdot \frac{1}{54}\right) \ge \frac{1}{4}\frac{1}{54^2m/54}\ge \frac{1}{10^3m}.
\end{align}
We conclude that for a random pair $(p,t)$, with probability at least $\frac{1}{10^6m\epsilon}$, we have that $\E[X_p(t)Y_p(t)]\ge \frac{1}{108}$. In particular, there exists a vector $p$ such that for $\frac{d}{10^3m}$ of the coordinates we have that
\begin{equation}\label{eq:good2}
\E_{S} [X_p(t)Y_p(t)] \ge \frac{1}{108}.\end{equation}
Let us fix the above $p$, for the choice of algorithm $A$, and we assume now that the data is distributed according to $D(p)$ and we denote by $\G(p)$ the set of coordinates $t\in [d]$ that satisfy \cref{eq:good2}. In particular we have that
\begin{equation}\label{eq:sizeG2}
|\G(p)| \ge \frac{d}{10^3m}.
\end{equation}

Next, given a sample $z_1,\ldots, z_m$, let us write $Z_t= \sum_{i=1}^m z_i(t)$, and we denote $Z_{1:t}=Z_1,\ldots, Z_{t}$. Then by standard chain rule we have that for $S=\{z_1,\ldots, z_m\}$
\begin{align*}  I\pcond{w_S;\frac{1}{m}\sum_{i=1}^m z_i}{\Z}&= \sum_{t=1}^d I\pcond{w_S;\sum_{i=1}^m z_i(t)}{\Z,Z_{1:t-1}}\\
&\ge 
 \sum_{t=1}^d I\pcond{w_S(t);\sum_{i=1}^m z_i(t)}{\Z,Z_{1:t-1}}.\\
 &= \sum_{t=1}^d I\pcond{w_S(t),Z_{1:t-1}; \sum_{i=1}^m z_i(t)}{\Z} - I\pcond{Z_{1:t-1};\sum_{i=1}^m z_i(t)}{\Z} &\textrm{chain rule}
 \\
 & \ge 
\sum_{t\in \G(p)} I\pcond{w_S(t); \sum_{i=1}^m z_i(t)}{\Z} \labelthis{eq:chain2}\end{align*}

We now consider a new random variable, for every fixed $\Z$:

\[ \X_p^\Z = 4\frac{1-9p^2}{9-9p^2}\sum_{i=1}^m \left[\sqrt{d}z_i(t) - \E\econd{z(t)}{\Z}\right].\]  
One can show that
\begin{equation}\label{eq:xpzxp} \E\left[ X_p^\Z(t) Y_p(t)\right] = \E\left[X_p(t)Y_p(t)\right]
\end{equation}
As before, by Chernoff bound, we have that for fixed $\Z$ when $\X_p^\Z$ is distributed according to $D(p)\left(\cdot |\Z\right)$, $\X_p^\Z$ has expected mean $0$ and is sub-Gaussian with variance proxy $\sigma^2 =4\left(\frac{1-9p^2}{9-9p^2}\right)^2m\le 4m.$ $Y_p$ is bounded by $1$ and hence has variance $1$ for every choice of $\Z$ and $p$. Therefore, by \cref{lem:cor} and \cref{eq:chain}
\begin{align*}
I\pcond{w_S;\frac{1}{m}\sum_{i=1}^m z_i}{\Z} &\ge 
\sum_{t\in \G(p)} I\pcond{w_S(t);\sum_{i=1}^m z_i(t)}{\Z} &\cref{eq:chain2}\\
& \ge \sum_{t\in \G(p)}I\pcond{\XX^\Z(t);\YY(t)}{\Z} &\textrm{data processing inequality}\\
&\geq
\sum_{t\in \G(p)} \E_{\Z}\left[ G_m\left(\E\econd{\frac{X_p^\Z(t)\cdot \YY(t)}{4}}{\Z}\right)\right] &\textrm{$G_m$ as defined in \cref{eq:gm}}\\
&\ge
\sum_{t\in \G(p)}^d G_m\left(\E\left[\frac{\XX^\Z(t)\cdot \YY(t)}{4}\right]\right) & \textrm{convexity}\\
&=
\sum_{t\in \G(p)}^d G_m\left(\E\left[\frac{\XX(t)\cdot \YY(t)}{4}\right]\right) & \cref{eq:xpzxp} \\
&\ge \Omega(d/m^2)
\end{align*}
}

\section{Proof of \cref{thm:main}}\label{prf:main}
For a vector $p\in [-1/3,1/3]^d$ we define a distribution $D(p)$ where $z\sim D(p)$ is such that $z\in \{-1/\sqrt{d},1/\sqrt{d}\}^d$ and each coordinate $z(t)$ is chosen uniformly such that $\E[z(t)]=p(t)/\sqrt{d}$. The loss function is the same as in \cref{eq:thef}, namely:

\[ f(w,z) = \|w-z\|^2.\]
%One can verify that $f$ is indeed strongly convex for every $z$, as required. 
Next, given an algorithm $A$ and positive $\epsilon<1/54$, choose $m$ such that $m>m(\epsilon)$. Then, from the output of the algorithm we construct an estimator of $p$, by letting \[ \hat p(t)=  \begin{cases} \sqrt{d}w_S(t) & |\sqrt{d}w_S(t)|\le 1 \\ \mathrm{sgn}(\sqrt{d}w_S(t))& \textrm{o.w.}\end{cases}.\] Using the inequality developed in \cref{eq:biasvariance}, and because $|p(t)|\le 1$, we have that
\begin{equation}\label{eq:genbound}
\E\left[\|\hat p(t)- p(t)\|^2\right]\le \E\left[\|\sqrt{d}w_S(t)- p(t)\|^2\right] \le 
 d\E\left[L_D(w_S)-L_D(w^\star)\right] \le 
 d\E[\err(w_S)] \le 
d\epsilon
\end{equation}

Next, for fixed $p$ we define two random variables:

%fix a value $\bar Z= Z_{1:{t-1}}$ and $p=P$, we let $w_S(t)$ and $Z_t$ be distributed according to $D(\cdot | p=P,\bar Z= Z_{1:t-1}$, and we define the random variables:
\begin{equation}\label{eq:XpYp} \XX(t)= \frac{1-9p(t)^2}{9-9p(t)^2}\sqrt{\E\left((\hat p(t) -p(t) )^2\right)}\sum_{i=1}^m (\sqrt{d}z_i(t)-p), \quad \YY(t) = \frac{1}{\sqrt{\E\left((\hat p(t)-p(t))^2\right)}}\left(\hat p(t)-p(t)\right).\end{equation}
We now apply \cref{lem:fingerprinting} with $Z_i= z_i(t)$, and \[\hat f(Z_{1:m})=f(z_1(t),\ldots,z_{m}(t)): = \hat{p}(t) \quad\textrm{and,}\quad \quad P= p(t).\]
Notice that $\hat f$ is not necessarily a deterministic function of $z_1(t),\ldots, z_m(t)$ as it is allowed to depend on the independent random variables $z_{i}(j)$ with $j\ne t$. Nevertheless, applying \cref{lem:fingerprinting} to each realization of $\hat f$ then by the definition of $X_p$ and $Y_p$, we have that for uniformly chosen $p$ and uniformly chosen coordinate $t$, we have that
\begin{equation}\label{eq:Z} \E_{p,t}\left[ \E_{S\sim D^m(p)}\left[
X_p(t)Y_p(t)
\right]
\right]\ge \frac{1}{27}- \E_{p,t}[ \E_{S}(\hat p(t)-p(t)^2)] \ge \frac{1}{27}-\epsilon \ge \frac{1}{54},
\end{equation}
and we have the following second moment bound, via Cauchy Schwartz:
\begin{align*}\E_{p,t}\left[\left(\E_{S\sim D^m(p)}[X_p(t)Y_p(t)]\right)^2\right]
&\le  \E_{p,t}\left[\E_{S}[X^2_p(t)]\E_{S}[Y^2_p(t)]\right]\\
&\le  \E_{p}\left[ \frac{1}{d} \sum_{t=1}^d \E_S\left[\left(\sum_{i=1}^m\sqrt{d}z_i(t) - p \right)^2\right]
\E_S\left[(\hat p (t)- p(t))^2\right]\right]
\\
&\le m \E_{p} \E_S\left[ \frac{1}{d}\sum_{t=1}^d \left(\hat p (t)- p(t)\right)^2\right]\\
&\le \frac{m}{d} \E\left[ \|\hat p(t)-p(t)\|^2\right] \\
& \le m\epsilon.\labelthis{eq:Z2}
\end{align*}
Write $Z=\E_{S\sim D^{m}(p)}\left[X_p(t)Y_p(y)\right]$, and
Apply Paley-Zygmund inequality \citep{paley1932note} (see \cref{rmk:paley} for the exact version of the inequality we use here), to obtain:
:

\begin{align*}
\P_{p,t}\left( \E_{S\sim D^m(p)}\left[
X_p(t)Y_p(t)\right] \ge \frac{1}{2}\cdot \frac{1}{54}\right)
&\ge
\P_{p,t}\left( Z \ge \frac{1}{2}\cdot \E_{p,t}\left[Z\right]\right)&\cref{eq:Z}\\
&
\ge\left(1-1/2\right)^2 \cdot
\frac{\E_{p,t}[Z]^2}{\E_{p,t}[Z^2] }& \textrm{Paley-Zygmund}\\
&= \frac{1}{4}\frac{1}{54^2m\epsilon}
& \cref{eq:Z,eq:Z2}
\\&\ge \frac{1}{10^6m\epsilon}. \labelthis{eq:concentration}
\end{align*}
We conclude that for a random pair $(p,t)$, with probability at least $\frac{1}{10^6m\epsilon}$, we have that \[\E[X_p(t)Y_p(t)]\ge \frac{1}{108}.\] In particular, there exists a vector $p$ such that, by taking expectation over $t$, for $\frac{d}{10^6m\epsilon}$ of the coordinates, $t$, we have that
\begin{equation}\label{eq:good}
\E_{S} [X_p(t)Y_p(t)] \ge \frac{1}{108}.\end{equation}
Let us fix the above $p$, for the choice of algorithm $A$ and $\epsilon>0$, and we assume now that the data is distributed according to $D(p)$ and we denote by $\G(p)$ the set of coordinates $t\in [d]$ that satisfy \cref{eq:good}. In particular we have that
\begin{equation}\label{eq:sizeG}
|\G(p)| \ge \frac{d}{10^6m\epsilon}.
\end{equation}

\ignore{
Next, given a sample $z_1,\ldots, z_m$, let us write $Z_t= \sum_{i=1}^m z_i(t)$, and we denote $Z_{1:t}=Z_1,\ldots, Z_{t}$. Then by standard chain rule we have that for $S=\{z_1,\ldots, z_m\}$
\begin{align*}  I\left(w_S;\frac{1}{m}\sum_{i=1}^m z_i\right)&= \sum_{t=1}^d I\pcond{w_S;\sum_{i=1}^m z_i(t)}{Z_{1:t-1}}\\
&\ge 
 \sum_{t=1}^d I\pcond{w_S(t);\sum_{i=1}^m z_i(t)}{Z_{1:t-1}}.\\
 &= \sum_{t=1}^d I\left(\left(w_S(t),Z_{1:t-1}\right); \sum_{i=1}^m z_i(t)\right) - I\left(Z_{1:t-1};\sum_{i=1}^m z_i(t)\right) &\textrm{chain rule}
 \\
 &= \sum_{t=1}^d I\left(w_S(t),Z_{1:t-1}; \sum_{i=1}^m z_i(t)\right)  & Z_{1:t-1} \perp \sum_{i=1}^m z_i(t)
 \\
 & \ge 
\sum_{t\in \G(p)} I\left(w_S(t); \sum_{i=1}^m z_i(t)\right) \labelthis{eq:chain}\end{align*}}
Next, given a sample $z_1,\ldots, z_m$, let us denote $\mathbf{z}_i^{(t)}$ the tuple,$\mathbf{z}_i^{(t)}= (z_i(1),\ldots, z_i(t-1)$. We apply standard chain rule to obtain:
\begin{align*} \sum_{i=1}^m  I\left(w_S;z_i\right)&= \sum_{i=1}^m \sum_{t=1}^d I\pcond{w_S; z_i(t)}{\mathbf{z}_i^{(t)}}\\
&\ge 
 \sum_{i=1}^m\sum_{t=1}^d I\pcond{w_S(t); z_i(t)}{\mathbf{z}_i^{(t)}} & \textrm{info processing}\\
 &= \sum_{t=1}^d\sum_{i=1}^m  I\left(\left(w_S(t),\mathbf{z}_i^{(t)}\right);  z_i(t)\right) - I\left(\mathbf{z}_i^{(t)};  z_i(t)\right) &\textrm{chain rule}
 \\
 &= \sum_{t=1}^d \sum_{i=1}^m I\left((w_S(t),\mathbf{z}_i^{(t)});  z_i(t)\right)  & \mathbf{z}_i^{(t)} \perp  z_i(t)
 \\
 & \ge 
\sum_{t\in \G(p)} \sum_{i=1}^m I\left(w_S(t); z_i(t)\right) &\textrm{info processing} \labelthis{eq:chain}\end{align*}
Next, we define:

\[\XX^i(t)= \frac{1-9p(t)^2}{9-9p(t)^2}\sqrt{\E\left((\hat p(t) -p(t) )^2\right)}(\sqrt{d}z_i(t)-p).\]
Notice that \[|\XX^i(t)| \le 2\sqrt{\E\left((\hat p(t) -p(t) )^2\right)},\] and that $\E\left[\YY(t)^2\right] \le 1$. Also, notice, that as $p$ is fixed, $\XX^i(t)$ is determined by $z_i(t)$ and similarly $\YY(t)$ is determined by $w_S(t)$, we thus have by information processing inequality:
\begin{align*}
\sum_{i=1}^m I\left(w_S;z_i\right) &\ge 
\sum_{i=1}^m \sum_{t\in \G(p)} I\left(w_S(t);z_i(t)\right) &\cref{eq:chain}\\
& \ge \sum_{i=1}^m \sum_{t\in \G(p)}I\left(\frac{\XX^i(t)}{2\sqrt{\E\left((\hat p(t) -p(t) )^2\right)}};\YY(t)\right) &\textrm{data processing inequality}\\
&\geq
\sum_{i=1}^m \sum_{t\in \G(p)} \frac{\E[\XX^i(t)\YY(t)]^4}{128\E\left[(\hat p(t) -p(t) )^2\right]^2} &\cref{lem:corbounded}\\                                                                                                                                                                                                                                                                                      
&\ge
m\sum_{t\in \G(p)} \frac{\E[\frac{1}{m}\sum_{i=1}^m\XX^i(t)\YY(t)]^4}{128\E\left[(\hat p(t) -p(t) )^2\right]^2} & \textrm{convexity}\\
&\ge 
 \sum_{t\in \G(p)} \frac{\E[\XX(t)\YY(t)]^4}{128m^3\E\left[(\hat p(t) -p(t) )^2\right]^2} & \XX(t) = \sum_{i=1}^m \XX^i(t)
\end{align*}

\ignore{
We now observe that, by Chernoff bound, we have that $\XX$ has expected mean $0$ and is sub-Gaussian with variance proxy $\sigma^2 =\left(\frac{1-9p^2}{9-9p^2}\right)^2m\E\left((\hat p(t)-p(t))^2\right)\le m\E\left((\hat{p}(t)-p(t))^2\right).$ Therefore, by \cref{lem:cor} and \cref{eq:chain}
\begin{align*}
I\left(w_S;\frac{1}{m}\sum_{i=1}^m z_i\right) &\ge 
\sum_{t\in \G(p)} I\left(w_S(t);\sum_{i=1}^m z_i(t)\right) &\cref{eq:chain}\\
& \ge \sum_{t\in \G(p)}I\left(\XX(t);\YY(t)\right) &\textrm{data processing inequality}\\
&\geq
\sum_{t\in \G(p)}^d \left[ G_m\left(\E\left(\frac{\XX(t)\cdot \YY(t)}{\sqrt{\E\left((\hat p(t)-p(t))^2\right)}}\right)\right]\right)\\
\end{align*}}
\ignore{where we define
\begin{equation}\label{eq:gm} G_m(a):= \hm{a},\end{equation}
}

Next we apply \cref{eq:good} that shows that $\E[\XX(t)\YY(t)] \ge 128$ and continue the analysis:
\begin{align*}
\sum_{i=1}^m  I\left(w_S;z_i\right) &\ge 
\sum_{t\in \G(p)} \frac{\E[\XX(t)\YY(t)]^4}{128m^3\E\left[(\hat p(t) -p(t) )^2\right]^2} \\
&\ge 
\sum_{t\in \G(p)} \frac{1}{128^5 m^3\E\left[(\hat p(t)-p(t))^2\right]^2} & \cref{eq:good} \\
&\ge 
|\G(p)| \left( \frac{1}{128^5m^3}\frac{1}{\left(\frac{1}{|\G(p)|} \sum_{t\in \G(p)}\E[(\hat{p}(t)-p(t))^2]\right)^2}\right) &\textrm{Convexity of $1/(x)^2$, for $x> 0$}\\
&\ge \frac{|\G(p)|^3}{128^5m^3 \E[\|\hat p(t)- p(t)\|^2]^2}\\
&\ge  \frac{|\G(p)|^3}{128^5m^3 d^2\epsilon^2}\\
&\ge  \frac{d}{128^5\cdot 10^{18}\cdot  m^{6}\epsilon^5} &\cref{eq:good}
\labelthis{eq:Gmboundpre},
\end{align*}
Overall then, we have that:
\begin{align*}\sum_{i=1}^m  I(w_S;z_i)&= 
\tilde\Omega\left(\frac{d}{m^6\epsilon^5}\right)
\end{align*}

\ignore{
and the last inequality follows from \cref{lem:cor}. By simple derivations, one can show that  $G_m(a)$ is convex and monotonically increasing in the regime $0\le a \le 2^{20} m$. We thus continue the analysis, exploiting monotinicity and convexity of $G_m$:

\begin{align*}
I\left(w_S;\frac{1}{m}\sum_{i=1}^m z_i\right) &\ge 
\sum_{t\in \G(p)}^d  G_m\left(\frac{1}{108\sqrt{\E\left((\hat p(t)-p(t))^2\right)}}\right) & \cref{eq:good}
\\
&\ge |\G(p)| G_m\left( \left[\frac{1}{108|\G(p)|}\sum_{t\in \G(p)}\frac{1}{\sqrt{\E\left((\hat p(t)-p(t))^2\right)}}\right]\right) &\textrm{Convexity of $G_m$}\\
&\ge 
 |\G(p)| G_m\left( \frac{1}{108}\frac{1}{\sqrt{ \frac{1}{|\G(p)|} \sum_{t\in \G(p)}\E[(\hat{p}(t)-p(t))^2]}}\right) &\textrm{Convexity of $1/\sqrt{\cdot}$}\\
&\ge |\G(p)| G_m\left(\frac{\sqrt{|\G(p)|}}{108\sqrt{\E[\|\hat p(t)- p(t)\|^2]}}\right)\\
&\ge  |\G(p)| G_m\left(\frac{\sqrt{|\G(p)|}}{108\sqrt{d\epsilon}}\right)\\
&\ge |\G(p)| G_m\left(\frac{1}{108\cdot 10^6\sqrt{m}\epsilon}\right)
\labelthis{eq:Gmboundpre},
\end{align*}

}
\ignore{
Overall then, we have that:
\begin{align*} I(w_S;\sum_{i=1}^m z_i)&\ge \frac{d}{10^6 m\epsilon} G_m\left(\frac{1}{108\cdot 10^6 \sqrt{m}\epsilon}\right)\\
&=
\tilde\Omega\left(\frac{d}{m^5\epsilon^5}\right)
\end{align*}
}
\begin{remark}[Remark on Paley-Zygmund inequality]\label{rmk:paley}
Paley Zygmund inequality states that for a random varialbe $Z$: 
\[ \P(Z> \theta \E[Z]) \ge (1-\theta)^2 \frac{\E[Z]^2}{\E[Z^2]}.\]
It is often assumed that $Z$ is a non-negative random variable for the inequality to hold. It is not hard to see that the inequality holds for \emph{any} random variable with nonnegative expectation (which is our case here). Indeed, the two-line proof goes as follows:

First,
\[\E[Z] = \E[Z\mathbf{1}_{Z\le \theta \E[Z]}+\E[Z\mathbf{1}_{Z> \theta \E[Z]}] \le \theta \E[Z] +\E[Z\mathbf{1}_{Z> \theta \E[Z]}].\]
The second term in RHS is at most $ \E[Z^2]^{1/2}\cdot \P(Z>\theta \E[Z])^{1/2}$ by Cauchy Schwartz inequality. The desired inequality then follows by by rearranging terms, dividing in $\E[Z^2]^{1/2}$ and taking square of both sides (which requires  positivity of $\E[Z]$). 
\end{remark}
\paragraph{Ackgnoweledgments} 
The author would like to thank the anonymous reviewers that suggested to incorporate the lower bound for the individual mutual information term. This turned out to simplify some of the proofs. The author would like to thank Shay Moran, and the research was funded in part by the ERC grant (GENERALIZATION, 10139692), as well as an ISF Grant (2188 $\backslash$ 20). Views and opinions expressed are however those of the author(s) only and do not necessarily reflect those of the European Union or the European Research Council Executive Agency. Neither the European Union nor the granting authority can be held responsible for them. The author is a recipient of a Google research scholar award and would like to acknowledge his thanks.  
\bibliographystyle{abbrvnat}
\bibliography{bibli}
\appendix
\section{A tighter information theoretic bound}{\label{apx:finegrain}}
In this section, we prove a slightly stronger lower bound that does not give a bound on the mutual information between the individual samples:

\begin{theorem}\label{thm:main2}
For every $0<\epsilon<1/54$ and algorithm $A$, with sample complexity $m(\epsilon)$, there exists 
a distribution $D$ over a space $\Z$, and loss function $f$, $1$-Lipschitz and convex in $w$, such that, if $|S|\ge m(\epsilon)$ then:
\begin{equation}\label{eq:main2} I(w_S;S) = \tilde \Omega\left(\frac{d}{ \left(\epsilon\cdot m(\epsilon)\right)^5}\right).\end{equation}
\end{theorem}

Notice, that for any algorithm with $\Delta_S(w_S)= O(1/m)$, we will have that:
\[ I(w_S,S) = \Omega(d),\] 
in particular, by plugging the above into the information-theoretic upper bound , we obtain the naive generalization error, of uniform convergence, as discussed in \cref{sec:discussion}.

For the proof, we will need a slightly stronger version of \cref{lem:corbounded}:

\begin{lemma}\label{lem:cor}
Let $X$ and $Y$ be two random variables such that $\E(X)=0$, $\E(Y^2)\le 1$, $\E[XY]=\beta$, and $X$ is sub-Gaussian with variance proxy $\sigma^2<c^2$. Namely,
$P(|X|\ge t) \le 2 e^{-t^2/c^2},$
Then:
\[ I(X,Y) \ge \hdef{\beta}{c} =\tilde \Omega \left(\frac{\beta^4}{c^4}\right).\]
\end{lemma}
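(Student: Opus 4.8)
The plan is to lower bound $I(X;Y)$ by comparing the true joint distribution of $(X,Y)$ to the product of its marginals, and to extract from the correlation constraint $\E[XY]=\beta$ a quantitative gap. The natural route is through the well-known variational/Donsker--Varadhan formula, or more simply through the fact that mutual information dominates the squared total-variation (Pinsker) or a suitable $\chi^2$-type lower bound. Concretely, I would start from $I(X;Y) = \E_Y\big[\dkl{P_{X|Y}}{P_X}\big]$ and use the Gibbs variational principle: for any fixed measurable $g$,
\[ \dkl{P_{X|Y}}{P_X} \ge \E_{X|Y}[g(X)] - \log \E_{X\sim P_X}\big[e^{g(X)}\big]. \]
Taking $g(X)=\lambda X$ for a parameter $\lambda>0$ to be optimized, and then taking expectation over $Y$, the first term becomes $\lambda\,\E[XY]=\lambda\beta$ (here I use $\E[Y g(X)]$ after moving the conditional expectation outside; more precisely I would apply the bound pointwise in $y$ with $g_y(X)=\lambda y X / (\text{something})$ — see below for the cleaner variant), and the second term is controlled by the sub-Gaussian moment generating function $\E[e^{\lambda X}]\le e^{\lambda^2 c^2/2}$.

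The cleaner execution: fix $\lambda>0$ and apply Donsker--Varadhan pointwise in $y$ with test function $x\mapsto \lambda y x$. This gives
\[ I(X;Y) \ge \lambda\,\E[XY] - \E_Y\Big[\log \E_{X\sim P_X}\big[e^{\lambda Y X}\big]\Big] = \lambda\beta - \E_Y\Big[\log \E_X\big[e^{\lambda Y X}\big]\Big]. \]
Since $X$ is sub-Gaussian with variance proxy $\sigma^2<c^2$ and mean zero, $\E_X[e^{\lambda y X}] \le e^{\lambda^2 y^2 c^2/2}$, so $\log\E_X[e^{\lambda Y X}] \le \lambda^2 Y^2 c^2/2$, and taking expectation over $Y$ and using $\E[Y^2]\le 1$ yields $I(X;Y) \ge \lambda\beta - \lambda^2 c^2/2$. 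Optimizing over $\lambda$ gives $\lambda = \beta/c^2$ and $I(X;Y) \ge \beta^2/(2c^2)$. This already gives a bound of order $\beta^2/c^2$, which is stronger than the claimed $\tilde\Omega(\beta^4/c^4)$ whenever $\beta \lesssim c$ — so modulo checking the exact constants and the precise form of the stated expression $\hdef{\beta}{c}$, this simple argument suffices; the weaker-looking $\beta^4/c^4$ in the statement is presumably an artifact of a different (perhaps more elementary truncation-based) proof, and I would note that the variational argument recovers at least as much.

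The main obstacle I anticipate is matching the \emph{exact} functional form $\hdef{\beta}{c}$ in the statement, including the logarithmic factor $\ln 2^{20}\tfrac{c^2}{\beta^2}$ and the constant $192\sqrt2$; the clean Donsker--Varadhan bound produces no logarithm, so either the author's intended proof proceeds differently (e.g., truncating $X$ at level $t\sim c\sqrt{\log(c/\beta)}$, bounding the truncated variance, and invoking a data-processing/Pinsker step that costs a square and a log), or the stated bound is deliberately loose. If I must reproduce their form, I would: (i) truncate $X$ to $\wt X = X\mathbf{1}\{|X|\le t\}$ with $t = c\sqrt{\log(1/\beta)}$ or so, controlling $|\E[\wt X Y]-\beta|$ via sub-Gaussian tails and $\E[Y^2]\le1$ (Cauchy--Schwarz); (ii) on the bounded variable, use that $I(\wt X;Y)\ge I(X;Y)$ fails in the wrong direction — so instead I would keep $X$ and use a bounded test function $g(x)=\lambda\,\mathrm{clip}(x,\pm t)$ in Donsker--Varadhan, so that $\E[e^{g(X)}]$ is controlled by $t$ and the first-order term loses only the truncation error; (iii) optimize $\lambda \sim \beta/t^2$, picking up the $\log$ through $t^2\sim c^2\log(1/\beta)$, which is exactly where the $\ln(c^2/\beta^2)$ factor enters. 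The routine parts — sub-Gaussian tail integrals, the Cauchy--Schwarz estimate $|\E[(X-\wt X)Y]| \le \sqrt{\E[(X-\wt X)^2]}$, and the final optimization — I would not grind through here.
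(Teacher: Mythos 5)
Your Donsker--Varadhan route is genuinely different from the paper's proof, and in fact it delivers a \emph{stronger} conclusion than the statement asks for. The paper's argument is a two-step reduction: it first truncates $X$ at a level $c'\sim c\sqrt{\ln(c/\beta)}$ to produce a bounded, mean-zero $\bar X$ with $\E[\bar X Y]\gtrsim\beta/c'$, then invokes the bounded-variable lemma (\cref{lem:corbounded}), which is proved via the coupling lemma plus Pinsker's inequality and gives $\sqrt{I}\gtrsim(\E[\bar X Y])^2$; the Pinsker step is what squares the correlation and costs a factor, and the truncation is what introduces the logarithm, which is exactly why the final bound is $\tilde\Omega(\beta^4/c^4)$. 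Your argument skips the truncation and the Pinsker detour entirely: applying the Gibbs/Donsker--Varadhan lower bound pointwise in $y$ with the linear test function $g_y(x)=\lambda y x$ gives $I(X;Y)\ge \lambda\beta - \E_Y[\log\E_{P_X}e^{\lambda Y X}]$, and the sub-Gaussian MGF control together with $\E[Y^2]\le1$ yields $I(X;Y)\ge\lambda\beta - O(\lambda^2 c^2)$, hence $I(X;Y)=\Omega(\beta^2/c^2)$ after optimizing $\lambda$. Since $\beta^2\le\E[X^2]\E[Y^2]=O(c^2)$ forces $\beta/c=O(1)$, your $\Omega(\beta^2/c^2)$ strictly dominates the paper's $\tilde\Omega(\beta^4/c^4)$, so your suspicion that the stated form is an artifact of the proof technique is correct, and the truncation fallback you sketch in the second half is unnecessary. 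Two small points worth tightening: (i) the lemma defines sub-Gaussian via the tail bound $P(|X|\ge t)\le 2e^{-t^2/c^2}$, not directly via the MGF, so $\E[e^{\lambda X}]\le e^{\lambda^2 c^2/2}$ is not literally available; you need the standard (and correct up to an absolute constant) conversion from tail decay plus $\E X=0$ to $\log\E[e^{\lambda X}]\le C\lambda^2 c^2$, which only changes constants in your final bound; (ii) in your fallback discussion you claim that data processing gives $I(\wt X;Y)\ge I(X;Y)$ ``in the wrong direction,'' but in fact $\wt X$ is a function of $X$, so $I(\wt X;Y)\le I(X;Y)$, which is the \emph{right} direction for a lower bound and is exactly what the paper exploits.
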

\begin{proof}
    
First, we can assume without loss of generality that $\E[Y]=0$. Otherwise, simply replace $Y$ with $Y_0= Y-\E[Y]$. One can show that $\E[Y_0^2]\le 1$ also, and \[\E[XY_0]= \E[XY]-\E[X\E[Y]]=\E[XY].\] Moreover, by data processing inequality $I(X,Y)\ge I(X,Y_0)$.

Next, we set $c' = 2c \sqrt{\ln  2^{30}\frac{c^3}{\beta^3}}$ and we define:
\[\hat X= \frac{\max(\min (X, c'),-c')}{c'}.\] We will show that
\begin{equation}\label{eq:corXhat} \E[\hat X\cdot Y] = \frac{\beta}{2c'}.\end{equation}
In turn, setting $\bar X= \frac{\hat X - \E[\hat X]}{2}$, we have that $\bar X$ is bounded by $1$ and has expectation $0$, moreover:
\begin{align*}
\E[ \bar X Y] &=\frac{1}{2}\E[\hat X Y - \E[\hat X]Y]\\
&= \frac{1}{2}\E[\hat X Y] -\frac{1}{2}\E[Y]\E[\bar X]\\
& = \frac{1}{2}\E[\hat X Y]
\ge \frac{\beta}{4c'}.
\end{align*}
The result then follows from \cref{lem:corbounded} and data processing inequality. Indeed, we have that

\begin{align*}
    \sqrt{I(X,Y)}&\ge \sqrt{I(\bar X,Y)}\\
    &
    \ge \frac{\beta^2}{32\sqrt{2}c'^2} & \cref{lem:corbounded}\\
    &
    \ge \frac{\beta^2}{128\sqrt{2}c^2\ln 2^{30}\frac{c^3}{\beta^3}}\\
    &
    \ge \frac{\beta^2}{192\sqrt{2}c^2\ln 2^{20} \frac{c^2}{\beta^2}}
\end{align*}

We are thus left with proving \cref{eq:corXhat}. For that, we begin with a standard bound (e.g. \cite{rigollet2015high}, Lemma 1.4) on the fourth moment of $X$ given by:
\[ \E[X^4] \le 32 c^4. \]

Next, one can show that:
\[  2^{22} \frac{c^3}{\beta^2}\ge 
4c \sqrt{\frac{2}{3}\ln \frac{2^{30}c^3}{\beta^3}}
\ge
2c \sqrt{\ln \frac{2^{30}c^3}{\beta^3}}\ge  c',\]
we then have that $c'$ satisfies: $c'\ge 2c\sqrt{\ln (2^8c'/\beta)}$. Then:
\begin{align*}
\E[c'\cdot \hat X \cdot Y ] &\ge  P(|X|\le c') \E\econd{X\cdot Y}{|X|\le  c'} - c'\cdot P(|X|\ge c') \E \econd{|Y|}{|X|\ge c'}
\\&\ge 
\E[X\cdot Y] - P(|X|\ge c') \E\econd{c'|Y| +X\cdot Y  }{|X|\ge c'}\\
&\ge
\E[X\cdot Y] - 2 P(|X|\ge c') \E\econd{|X\cdot Y|}{|X|\ge c'}\\
&= 
\E[X\cdot Y] - 2 \sqrt{P(|X|\ge c') \E\econd{X^2}{|X|\ge c'} \cdot P(|X|\ge c')\E\econd{Y^2}{|X|\ge c'}}\\
&\ge 
\E[X\cdot Y] - 2 \sqrt{P(|X|\ge c') \E\econd{X^2}{|X|\ge c'}\E[Y^2]}\\
&\geq
\E[X\cdot Y] - 2 \sqrt[4]{P(|X|\ge c')} \sqrt[4]{P(|X|\ge c')\E\econd{X^4}{|X|\ge c'}}\\
&\ge 
\E[X\cdot Y] - 64 c' \sqrt[4]{P(|X|\ge c')} 
\\&\ge \E[X\cdot Y] - 64 c' \sqrt[4]{2}e^{-c'^2/4c^2}
\\&\ge
\beta - \frac{1}{2}\beta.
\end{align*}
\end{proof}
\paragraph{Proof of \cref{thm:main2}}

For a vector $p\in [-1/3,1/3]^d$ we define a distribution $D(p)$ where $z\sim D(p)$ is such that $z\in \{1/\sqrt{d},-1/\sqrt{d}\}^d$ and each coordinate $z(t)$ is chosen uniformly such that $\E[z(t)]=p(t)/\sqrt{d}$. The loss function is the same as in \cref{eq:thef}, namely:

\[ f(w,z) = \|w-z\|^2.\]
%One can verify that $f$ is indeed strongly convex for every $z$, as required. 
Next, given an algorithm $A$ and positive $\epsilon<1/54$, choose $m$ such that $m>m(\epsilon)$. Then, from the output of the algorithm we construct an estimator of $p$, by letting $\hat p(t)= \sqrt{d}w_S(t)$. Using the inequality developed in \cref{eq:biasvariance}, we have that
\begin{equation}\label{eq:genbound}
\E\left[\|\hat p(t)- p(t)\|^2\right]\le 
 d\E\left[L_D(w_S)-L_D(w^\star)\right] \le 
 d\E[\err(w_S)] \le 
d\epsilon
\end{equation}

Next, for fixed $p$ we define two random variables:

%fix a value $\bar Z= Z_{1:{t-1}}$ and $p=P$, we let $w_S(t)$ and $Z_t$ be distributed according to $D(\cdot | p=P,\bar Z= Z_{1:t-1}$, and we define the random variables:
\begin{equation}\label{eq:XpYp} \XX(t)= \frac{1-9p^2}{9-9p^2}\sqrt{\E\left((\hat p(t) -p(t) )^2\right)}\sum_{i=1}^m (\sqrt{d}z_i(t)-p), \quad \YY(t) = \frac{1}{\sqrt{\E\left((\hat p(t)-p(t))^2\right)}}\left(\hat p(t)-p(t)\right).\end{equation}

Applying \cref{lem:fingerprinting}, the fingerprinting Lemma, we have that if we choose $p$ uniformly and a coordinate $t$ uniformly we have that
\begin{equation}\label{eq:Z} \E_{p,t}\left[ \E_{S\sim D^m(p)}\left[
X_p(t)Y_p(t)
\right]
\right]\ge \frac{1}{27}- \E_{p,t}[ \E_{S}(\hat p(t)-p(t)^2)] \ge \frac{1}{27}-\epsilon \ge \frac{1}{54},
\end{equation}
and we have the following second moment bound, via Cauchy Schwartz:
\begin{equation}\label{eq:Z2}\E_{p,t}\left[\left(\E_{S\sim D^m(p)}[X_p(t)Y_p(t)]\right)^2\right]
\le
 \E_{p,t}\left[\E_{S}[X^2_p(t)]\E_{S}[Y^2_p(t)]\right] \le m\epsilon.
\end{equation}
Write $Z=\E_{S\sim D^{m}(p)}\left[X_p(t)Y_p(y)\right]$, and
Apply Paley-Zygmund inequality \cite{paley1932note}, we have that
:

\begin{align*}
\P_{p,t}\left( \E_{S\sim D^m(p)}\left[
X_p(t)Y_p(t)\right] \ge \frac{1}{2}\cdot \frac{1}{54}\right)
&\ge
\P_{p,t}\left( Z \ge \frac{1}{2}\cdot \E_{p,t}\left[Z\right]\right)&\cref{eq:Z}\\
&
\ge\left(1-1/2\right)^2 \cdot
\frac{\E_{p,t}[Z]^2}{\E_{p,t}[Z^2] }& \textrm{Paley-Zygmund}\\
&= \frac{1}{4}\frac{1}{54^2m\epsilon}
& \cref{eq:Z,eq:Z2}
\\&\ge \frac{1}{10^6m\epsilon}. \labelthis{eq:concentration}
\end{align*}
We conclude that for a random pair $(p,t)$, with probability at least $\frac{1}{10^6m\epsilon}$, we have that \[\E[X_p(t)Y_p(t)]\ge \frac{1}{108}.\] In particular, there exists a vector $p$ such that for $\frac{d}{10^6m\epsilon}$ of the coordinates we have that
\begin{equation}\label{eq:good}
\E_{S} [X_p(t)Y_p(t)] \ge \frac{1}{108}.\end{equation}
Let us fix the above $p$, for the choice of algorithm $A$ and $\epsilon>0$, and we assume now that the data is distributed according to $D(p)$ and we denote by $\G(p)$ the set of coordinates $t\in [d]$ that satisfy \cref{eq:good}. In particular we have that
\begin{equation}\label{eq:sizeG}
|\G(p)| \ge \frac{d}{10^6m\epsilon}.
\end{equation}

Next, given a sample $z_1,\ldots, z_m$, let us write $Z_t= \sum_{i=1}^m z_i(t)$, and we denote $Z_{1:t}=Z_1,\ldots, Z_{t}$. Then by standard chain rule we have that for $S=\{z_1,\ldots, z_m\}$
\begin{align*}  I\left(w_S;\frac{1}{m}\sum_{i=1}^m z_i\right)&= \sum_{t=1}^d I\pcond{w_S;\sum_{i=1}^m z_i(t)}{Z_{1:t-1}}\\
&\ge 
 \sum_{t=1}^d I\pcond{w_S(t);\sum_{i=1}^m z_i(t)}{Z_{1:t-1}}.\\
 &= \sum_{t=1}^d I\left(\left(w_S(t),Z_{1:t-1}\right); \sum_{i=1}^m z_i(t)\right) - I\left(Z_{1:t-1};\sum_{i=1}^m z_i(t)\right) &\textrm{chain rule}
 \\
 &= \sum_{t=1}^d I\left(w_S(t),Z_{1:t-1}; \sum_{i=1}^m z_i(t)\right)  & Z_{1:t-1} \perp \sum_{i=1}^m z_i(t)
 \\
 & \ge 
\sum_{t\in \G(p)} I\left(w_S(t); \sum_{i=1}^m z_i(t)\right) \labelthis{eq:chain}\end{align*}

We now observe that, by Chernoff bound, we have that $\XX$ has expected mean $0$ and is sub-Gaussian with variance proxy $\sigma^2 =\left(\frac{1-9p^2}{9-9p^2}\right)^2m\E\left((\hat p(t)-p(t))^2\right)\le m\E\left((\hat{p}(t)-p(t))^2\right).$ Therefore, by \cref{lem:cor} and \cref{eq:chain}
\begin{align*}
I\left(w_S;\frac{1}{m}\sum_{i=1}^m z_i\right) &\ge 
\sum_{t\in \G(p)} I\left(w_S(t);\sum_{i=1}^m z_i(t)\right) &\cref{eq:chain}\\
& \ge \sum_{t\in \G(p)}I\left(\XX(t);\YY(t)\right) &\textrm{data processing inequality}\\
&\geq
\sum_{t\in \G(p)}^d \left[ G_m\left(\E\left(\frac{\XX(t)\cdot \YY(t)}{\sqrt{\E\left((\hat p(t)-p(t))^2\right)}}\right)\right]\right)\\
\end{align*}
where we define
\begin{equation}\label{eq:gm} G_m(a):= \hm{a},\end{equation}
and the last inequality follows from \cref{lem:cor}. By simple derivations, one can show that  $G_m(a)$ is convex and monotonically increasing in the regime $0\le a \le 2^{20} m$. We thus continue the analysis, exploiting monotinicity and convexity of $G_m$:

\begin{align*}
I\left(w_S;\frac{1}{m}\sum_{i=1}^m z_i\right) &\ge 
\sum_{t\in \G(p)}^d  G_m\left(\frac{1}{108\sqrt{\E\left((\hat p(t)-p(t))^2\right)}}\right) & \cref{eq:good}
\\
&\ge |\G(p)| G_m\left( \left[\frac{1}{108|\G(p)|}\sum_{t\in \G(p)}\frac{1}{\sqrt{\E\left((\hat p(t)-p(t))^2\right)}}\right]\right) &\textrm{Convexity of $G_m$}\\
&\ge 
 |\G(p)| G_m\left( \frac{1}{108}\frac{1}{\sqrt{ \frac{1}{|\G(p)|} \sum_{t\in \G(p)}\E[(\hat{p}(t)-p(t))^2]}}\right) &\textrm{Convexity of $1/\sqrt{\cdot}$}\\
&\ge |\G(p)| G_m\left(\frac{\sqrt{|\G(p)|}}{108\sqrt{\E[\|\hat p(t)- p(t)\|^2]}}\right)\\
&\ge  |\G(p)| G_m\left(\frac{\sqrt{|\G(p)|}}{108\sqrt{d\epsilon}}\right)\\
&\ge |\G(p)| G_m\left(\frac{1}{108\cdot 10^6\sqrt{m}\epsilon}\right)
\labelthis{eq:Gmboundpre},
\end{align*}

Overall then, we have that:
\begin{align*} I(w_S;\sum_{i=1}^m z_i)&\ge \frac{d}{10^6 m\epsilon} G_m\left(\frac{1}{108\cdot 10^6 \sqrt{m}\epsilon}\right)\\
&=
\tilde\Omega\left(\frac{d}{m^5\epsilon^5}\right)
\end{align*}

\end{document}